\theoremstyle{plain}
\newtheorem{theorem}{Theorem}[section]
\newtheorem{lemma}[theorem]{Lemma}
\theoremstyle{definition}
\theoremstyle{remark}
\newtheorem{remark}[theorem]{Remark}
\def\eqref#1{equation~\ref{#1}}
\def\1{\bm{1}}
\DeclareMathAlphabet{\mathsfit}{\encodingdefault}{\sfdefault}{m}{sl}
\SetMathAlphabet{\mathsfit}{bold}{\encodingdefault}{\sfdefault}{bx}{n}
\DeclareMathOperator*{\argmax}{arg\,max}
\definecolor{DarkBlue}{rgb}{0.1,0.1,0.5}
\definecolor{DarkGreen}{rgb}{0.1,0.5,0.1}
\definecolor{deepyellow}{RGB}{218, 174, 42}
\pgfplotsset{compat=1.18} 
\definecolor{DarkBlue}{rgb}{0.3,0.3,0.70}
\definecolor{azure}{rgb}{0.0, 0.5, 1.0}
\definecolor{darkcerulean}{rgb}{0.03, 0.27, 0.49}
\definecolor{denim}{rgb}{0.08, 0.38, 0.74}
\definecolor{DarkGreen}{rgb}{0.3,0.7,0.3}
\definecolor{lighter-gray}{gray}{0.95}
\definecolor{AlgHighlight}{HTML}{228B22}
\definecolor{turquoise}{HTML}{30D5C8}  
\newtcolorbox{keytakeawaysbox}{%
  enhanced,
  colback=white,
  colframe=turquoise,
  coltitle=black,                    
  boxrule=0.8pt,
  arc=2mm,                           
  left=2mm,right=2mm,top=0.5mm,bottom=0.5mm,
  title={\small\bfseries Key Takeaways},
  attach boxed title to top left={
    yshift=-\tcboxedtitleheight/2,
    xshift=6pt
  },
  boxed title style={
    colback=white,
    colframe=turquoise,
    boxrule=0.8pt,
    arc=2mm                          
  },
}
\definecolor{takeawayframe}{RGB}{45,82,160}   
\definecolor{takeawayback} {RGB}{245,248,252} 
\newtcolorbox{takeawaybox}[1][]{%
  enhanced, breakable,
  sharp corners=south,         
  colback   = gray!5!white,
  colframe  = gray!10!white,
  boxrule   = 0.7pt,
  left      = 1mm,             
  right     = 1mm,
  top       = 0.5mm,
  bottom    = 0.5mm,
  before skip=0pt,             
  after skip =0pt,             
  before upper=\itshape,       
  #1}                          
\renewcommand{\1}{ \mathds{1}}
\renewcommand{\emptyset}{\varnothing}
\newcommand{\CommentLines}[1]{}
\newcolumntype{x}[1]{>{\centering\let\newline\\\arraybackslash\hspace{0pt}}m{#1}}
\definecolor{green}{HTML}{C6EFCE}
\definecolor{red}{HTML}{FFC7CE}
\definecolor{yellow}{HTML}{FFEB9C}
\definecolor{darkgreen}{rgb}{0, 0.6, 0}
\tiny\color{gray}, 
\definecolor{titlecolor}{RGB}{70,130,180}  
\definecolor{titlecolor}{RGB}{45,82,160}    
\definecolor{inputcolor}{RGB}{0,100,0}     
\definecolor{inputcolor}{RGB}{25,70,25}
\definecolor{outputcolor}{RGB}{139,69,19}  
\definecolor{stepcolor}{RGB}{25,25,112}    
\definecolor{mathcolor}{RGB}{128,0,128}    
\definecolor{inputcolor}{RGB}{25,111,61}     
\definecolor{outputcolor}{RGB}{155,89,182}   
\definecolor{outputcolor}{RGB}{192,57,43}    
\definecolor{outputcolor}{RGB}{170,85,0}     
\definecolor{outputcolor}{RGB}{45,82,160}    
\definecolor{outputcolor}{RGB}{75,61,136}    
\newcommand{\refa}{\textsc{Refa}}
\newcommand{\simpo}{\textsc{Simpo}}
\newcommand{\ampo}{\textsc{Ampo}}
\newcommand{\ampobk}{\textsc{Ampo-BottomK}}
\newcommand{\ampocs}{\textsc{Ampo-CoreSet}}
\newcommand{\ampoos}{\textsc{Ampo-OptSelect}}
\icmltitlerunning{Active Multi Preference Optimization}
\begin{document}

\twocolumn[
\icmltitle{AMPO: Active Multi-Preference Optimization for Self-play Preference Selection}



\icmlsetsymbol{equal}{*}

\begin{icmlauthorlist}
\icmlauthor{Taneesh Gupta}{equal,comp}
\icmlauthor{Rahul Madhavan}{equal,iisc}
\icmlauthor{Xuchao Zhang}{comp}
\icmlauthor{Chetan Bansal}{comp}
\icmlauthor{Saravan Rajmohan}{comp}

\end{icmlauthorlist}

\icmlaffiliation{iisc}{IISc, Bangalore}
\icmlaffiliation{comp}{Microsoft}

\icmlcorrespondingauthor{Taneesh Gupta}{t-taneegupta@microsoft.com}
\icmlcorrespondingauthor{Rahul Madhavan}{mrahul@iisc.com}







\icmlkeywords{Machine Learning, ICML}

\vskip 0.3in

]




\printAffiliationsAndNotice{\icmlEqualContribution}

\begin{abstract}
Multi-preference optimization enriches language-model alignment beyond pairwise preferences by contrasting entire sets of helpful and undesired responses, enabling richer training signals for large language models. During self-play alignment, these models often produce numerous candidate answers per query, making it computationally infeasible to include all of them in the training objective. We propose \textit{Active Multi-Preference Optimization} ($\ampo$), which combines \emph{on-policy} generation, a multi-preference \emph{group-contrastive} loss, and \emph{active} subset selection. Specifically, we score and embed large candidate pools of responses, then pick a small but informative subset—covering reward extremes and distinct semantic clusters—for preference optimization. 
The resulting contrastive~training scheme identifies not only the best and worst answers but also subtle, underexplored modes crucial for robust alignment.  
Theoretically, we provide guarantees of expected reward maximization using our active selection method.
Empirically, AMPO achieves state-of-the-art results on \textit{AlpacaEval} with Llama 8B and Mistral 7B. 
We release our datasets \href{https://huggingface.co/Multi-preference-Optimization}{here}.
\end{abstract}

\vspace{-0.15in}
\section{Introduction}

\begin{figure}[!thbp]
    \centering
    \includegraphics[width=\linewidth]{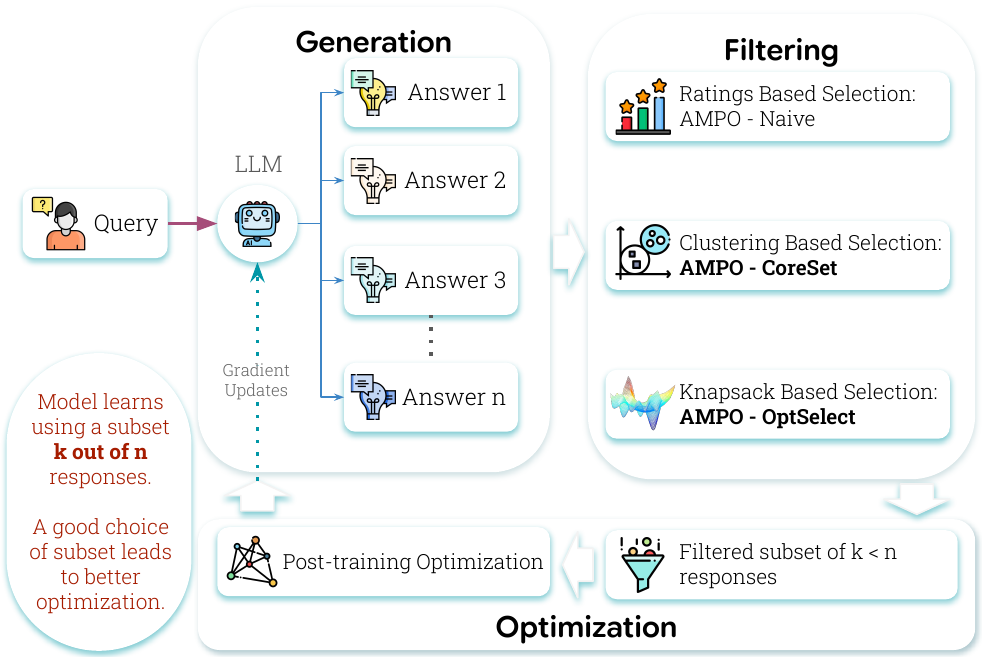}
    \vspace{-0.15in}
    \caption{Overview of the Active Multi-Preference Optimization framework. Given a query, the LLM generates diverse responses, which are evaluated by a rater model. Selected responses with different ratings and semantics are then used to train and align the LLM through preference optimization. Active selection of the preferences to optimize over improves training dynamics.\vspace{-0.25in}}
\label{fig:ampo_illustration}
\end{figure}

\vspace{-0.05in}
Preference Optimization (PO) has become a standard approach for aligning large language models (LLMs) with human preferences \citep{christiano2017deep, ouyang2022training, bai2022training}. Traditional alignment pipelines typically rely on pairwise or binary preference comparisons, which may not fully capture the subtleties of human judgment \citep{rafailov2024direct, liu2024tis, korbak2023pretraining}. As a remedy, there is increasing interest in \emph{multi-preference} methods, which consider entire sets of responses when providing feedback \citep{cui2023ultrafeedback, chen2024noise, gupta2024swepo}. By learning from multiple “good” and “bad” outputs simultaneously, these approaches deliver richer alignment signals. At the same time, an important trend in alignment is the shift to \emph{on-policy} data generation, where the policy learns directly from its own distribution of outputs at each iteration \citep{chen2024self, kumar2024training, wu2023fine, wu2024self}. This feedback loop can accelerate convergence ensuring that the training data stays relevant to the model’s behavior.

\vspace{-0.05in}
However, multi-preference alignment faces a serious bottleneck: modern LLMs can easily generate dozens of candidate responses per query, and incorporating \emph{all} of these into a single training objective can become computationally infeasible \citep{askell2021general}. Many of these sampled responses end up being highly similar or near-duplicates, providing limited additional information for gradient updates \citep{long2024llms}. Consequently, naive attempts to process all generated responses cause both memory blow-ups and diminishing returns in training \citep{dubey2024llama}. Given these constraints, identifying a \emph{small yet highly informative} subset of candidate responses is critical for effective multi-preference learning.

\vspace{-0.05in}
To understand the challenge of selecting informative responses, consider the query's answer space as a \emph{response landscape} (See Figure \ref{fig:ampo_benefits}). Each point in this landscape represents a possible response, characterized by its semantic properties (its location in the embedding space) and its quality (determined by a reward model). Furthermore, the LLM's current policy defines a probability density over this landscape. A naive approach of randomly sampling responses and treating them equally might overemphasize frequently generated areas, even if they contain only mediocre or slightly problematic answers. This risks overlooking critical feedback from less common, yet highly informative, regions—such as subtle failure points in underexplored semantic terrains, or exceptionally good responses that are rarely generated. Therefore, an ideal selection strategy must actively \textit{explore} this landscape, identifying responses that are not just ``good'' or ``bad'' but also semantically distinct, covering reward extremes, and exposing underexplored modes that are crucial for robust alignment \citep{yu2024large}. In this paper, we show that this targeted selection can be tied to an \emph{optimal} way of suppressing undesired modes under a mild Lipschitz assumption (see Section \ref{sec:theory_main}).

\vspace{-0.05in}
At its core, the problem of efficiently selecting the most impactful responses for feedback aligns with the principles of \emph{active learning} \citep{cohn1996active, ceravolo2024active, xiao2023freeal}.
By selecting a small yet semantically diverse subset of responses, the model effectively creates a \emph{curriculum} for itself. Rather than passively training on random or exhaustively sampled data, an active learner \emph{queries} the examples that yield the greatest improvement when labeled. In our context, we actively pick a handful of responses that best illustrate extreme or underexplored behaviors -- whether very good, very bad, or semantically distinct \citep{wu2023fine}. This helps the model quickly eliminate problematic modes while reinforcing the most desirable responses. Crucially, we remain on-policy: after each update, the newly refined policy generates a fresh batch of responses, prompting another round of active subset selection \citep{liu2021self}.

\vspace{-0.05in}
We propose \textbf{Active Multi-Preference Optimization (AMPO)}, a framework that unifies (a) on-policy data generation, (b) group-based preference learning, and (c) \emph{active} subset selection. Specifically, we adopt a reference-free group-contrastive objective known as \refa\ \citep{gupta2024refa}, which jointly leverages multiple “positive” and “negative” responses in a single loss term. On top of this, we explore various active selection schemes—ranging from simplest bottom-$K$ ranking \citep{meng2024simpo} to coreset-based clustering \citep{cohen2021new, cohen2022improved, huang2019coresets} and a more theoretically grounded “Opt-Select” method that ties coverage to maximizing expected reward. Our contributions are: 
(i) a unifying algorithmic pipeline for multi-preference alignment with active selection, 
(ii) theoretical results demonstrating that coverage of distinct clusters 
à la k-medoids, can serve as an \emph{optimal} negative-selection strategy, 
and (iii) empirical evaluations showing that \ampo\ achieves state-of-the-art results compared to strong alignment baselines like \simpo.
Altogether, our approach enables models to learn more reliably from diverse sets of model behaviors.


\begin{figure}[!t]
    \centering
    \includegraphics[width=\linewidth]{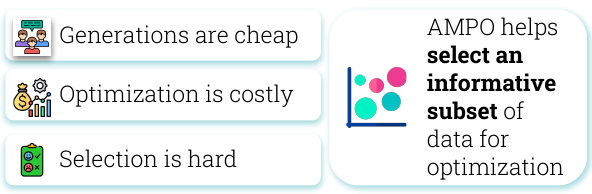}
    \vspace{-0.15in}
    \caption{A learner can easily generate $n$ responses to a given query, but selection of a much smaller subset $k\ll n$  to train on is a hard problem. This paper addresses this problem through techniques from clustering as well as knapsack related problems.\vspace{-0.2in}}
\label{fig:ampo_benefits}
\end{figure}

\subsection{Our Contributions}
\begin{itemize}[leftmargin=1em]
    \item \textbf{Algorithmic Novelty:} We propose \emph{Active Multi-Preference Optimization} (\ampo), an on-policy framework that blends group-based preference alignment with active subset selection without exhaustively training on all generated responses. This opens out avenues for research on how to select for synthetic data, as we outline in Sections \ref{sec:subset_selection_strategies} and \ref{sec:discussion_future_work}.
    \item \textbf{Theoretical Insights:} Under mild Lipschitz assumptions, we show that coverage-based negative selection can systematically suppress low-reward modes and maximizes expected reward. This analysis (in Sections \ref{sec:opt_select} and \ref{sec:theory_main}) connects our method to the weighted $K$-medoids problem, yielding performance guarantees for alignment.
    \item \textbf{State-of-the-Art Results:} Empirically, \ampo\ sets a new benchmark on \textit{AlpacaEval} with Llama 8B, surpassing strong baselines like $\simpo$ by focusing on a small but strategically chosen set of responses each iteration (see Section \ref{sec:experiments_main}).
    \item \textbf{Dataset Releases:} We publicly release our \href{https://huggingface.co/datasets/Multi-preference-Optimization/AMPO-Coreset-selection}{\texttt{AMPO-Coreset-Selection}} 
    and \href{https://huggingface.co/datasets/Multi-preference-Optimization/AMPO-OPT-Selection}{\texttt{AMPO-Opt-Selection}} datasets on Hugging Face. These contain curated response subsets for each prompt, facilitating research on multi-preference alignment.
\end{itemize}

\section{Related Work}
\label{sec:related_work_main} 

Recent advances in preference optimization \citep{rafailov2024direct, azar2023general, Hong2024ORPOMP} have moved beyond simple pairwise comparisons to include multiple responses per query. This shift is largely driven by datasets like UltraFeedback \citep{cui2023ultrafeedback}, which provide scalar rewards for diverse candidate outputs. Within this multi-preference paradigm, methods such as InfoNCA \citep{chen2024noise} utilize noise-contrastive objectives to align models with scalar rewards. AMPO builds upon these multi-preference approaches by employing REFA \citep{gupta2024refa}, a group-contrastive objective that contrasts sets of selected and rejected responses to emphasize multiple highly informative (positive or negative) examples.

A crucial development in LLM alignment is the adoption of \emph{on-policy} or “self-play” data generation \citep{chen2024self, wu2024self}. While ensuring training data relevance and accelerating convergence, this process can generate a vast number of candidate responses per query. Incorporating all these responses into the training objective becomes computationally infeasible and leads to diminishing returns due to high similarity and redundancy \citep{askell2021general, long2024llms}.

To address this computational bottleneck, AMPO integrates principles from \emph{active learning} \citep{cohn1996active, settles2009active}. Our active subset selection strategies draw from combinatorial optimization and clustering techniques, such as weighted $k$-medoids and coreset construction \citep{har2004coresets, cohen2022improved}. These methods enable AMPO to efficiently identify a small, high-impact subset of responses that effectively cover the diverse landscape of generated outputs, encompassing both reward extremes and distinct semantic regions, thereby facilitating robust and efficient alignment.

\vspace{-0.1in}
\section{Notations and Preliminaries}
\label{sec:notations_preliminaries}

\vspace{-0.1in}

On-policy alignment of LLMs with learnt preference scores often involves generating multiple candidate responses (say $N$ responses) for a given prompt. Utilizing all these $N$ candidates for training can be computationally prohibitive and may offer diminishing returns due to response similarity. Our framework, Active Multi-Preference Optimization (AMPO), addresses this by focusing on the active selection of a small, yet highly informative, subset of these responses within a pre-specified \emph{budget} (say budget is $K$ with $K \ll N$). This section establishes the notation and foundational concepts for generating responses, evaluating them, defining selection criteria like \emph{coverage}, and choosing subsets for efficient alignment using a group-contrastive objective.

\vspace{-0.1in}
\paragraph{Queries, Policy, and Response Generation.}
Let $\mathcal{D} = \{x_1, x_2, \ldots, x_M\}$ be a dataset of $M$ \emph{queries} (or \emph{prompts}), each from a larger space $\mathcal{X}$. We have a policy model $P_\theta(y \mid x)$, parameterized by $\theta$, which produces a distribution over possible responses $y \in \mathcal{Y}$. For each query $x_i$, we generate a pool of $N$ candidate responses $\{y_{i,1}, y_{i,2}, \dots, y_{i,N}\}$ by sampling from $P_\theta(y \mid x_i)$ at a fixed \emph{temperature} (e.g., $\text{Temp.} = 0.8$)
For notational simplicity, we consider a single query $x$ and its $N$ sampled responses $\{y_1, \dots, y_N\}$.

\vspace{-0.1in}
\paragraph{Response Evaluation and Embedding.}
Each response $y_j$ (for $j=1, \dots, N$) is assigned a scalar reward
\vspace{-0.1in}
\begin{equation}
r_j \;=\; \mathcal{R}(x,\,y_j) \;\in\; [0,1],
\end{equation}

\vspace{-0.1in}
where $\mathcal{R}$ is a fixed reward function. We also embed each response via $\mathbf{e}_j = \mathcal{E}(y_j) \in \mathbb{R}^d$, where $\mathcal{E}$ is an encoder capturing semantic properties. The distance between any two responses $y_j$ and $y_l$ in this embedding space is denoted $d(\mathbf{e}_j, \mathbf{e}_l)$ (e.g., Euclidean or $L2$ distance).

\vspace{-0.1in}
\paragraph{Budgeted Subset Selection and Coverage.}
Given the $N$ generated responses, our objective is to select a subset $\mathcal{S} \subset \{y_1, \dots, y_N\}$ of size $K < N$, where $K$ is a pre-specified \emph{budget} of responses to be used for training. The selection aims to maximize a utility function $\mathcal{U}$ that considers factors such as response quality (rewards), probability of generation, and embedding space coverage. Formally,

\vspace{-0.17in}
\begin{equation}
\label{eq:subset_selection}
\mathcal{S}^* 
\;=\;
\argmax_{\substack{\mathcal{S}'\subset\{y_1,\dots,y_N\} \\ |\mathcal{S}'| = K}} 
\,\mathcal{U}\Bigl(\mathcal{S}',\, \{r_j\}_{y_j\in\mathcal{S}'},\, \{\mathbf{e}_j\}_{y_j\in\mathcal{S}'}\Bigr).
\end{equation}

\vspace{-0.12in}
A key aspect of a ``good'' subset $\mathcal{S}^*$ is its \emph{coverage} of the original $N$ responses. High coverage implies that for any response $y_j$ from the original $N$ candidates, its minimum distance to any response $y_l \in \mathcal{S}^*$ is small. More formally, we can define a coverage cost for a chosen subset $\mathcal{S}$  with respect to the initial $N$ responses is defined as: 
\vspace{-0.15in}
$$
\text{coverage\_cost}(\mathcal{S}) = \sum_{j=1}^{N} \min_{y_l \in \mathcal{S}} d(\mathbf{e}_j, \mathbf{e}_l).
$$

\vspace{-0.15in}
A subset $\mathcal{S}$ provides high coverage if this sum is minimized, ensuring that the selected $K$ responses are representative of the diverse characteristics present in the initial pool of $N$. The active selection strategies discussed later (Section \ref{sec:subset_selection_strategies}) aim to find such high-coverage subsets.

\vspace{-0.1in}
\paragraph{Group-Contrastive Alignment with REFA.}
Once the subset $\mathcal{S}^*$ (of size $K$) is selected, it is partitioned into a set of \emph{accepted} responses $\mathcal{S}^+$ and a set of \emph{rejected} responses $\mathcal{S}^-$, such that $\mathcal{S}^* = \mathcal{S}^+ \cup \mathcal{S}^-$ and $\mathcal{S}^+ \cap \mathcal{S}^- = \emptyset$. The specific criteria for this partitioning can vary (e.g., based on reward thresholds, or a one-vs-many split as in Algorithm \ref{alg:one_vs_k_active}). 
For a query $x$, we train $\theta$ using the reference-free \emph{group-contrastive} objective \textsc{Refa} \citep{gupta2024refa} by contrasting these two sets:

\vspace{-0.2in}
\begin{equation}
L_{\textsc{Refa}}(\theta) 
=
-\log\Biggl(
  \frac{
    \displaystyle
    \sum_{\,y_j \,\in\, \mathcal{S}^+}\;
    \exp\Bigl[
      s_\theta\bigl(y_j \mid x\bigr) 
    \Bigr]
  }{
    \displaystyle
    \sum_{y_j\in(\mathcal{S}^+\cup \mathcal{S}^-)} 
    \exp\Bigl[
      s_\theta\bigl(y_j \mid x\bigr) 
    \Bigr]
  }
\Biggr)
\end{equation}

\vspace{-0.1in}
where the score for a response $y_j$, $s_\theta\bigl(y_j \mid x\bigr)$, incorporates its log-probability under the current policy $P_\theta(y_j\mid x)$ and its associated reward $r_j$. This score is given by:

\vspace{-0.25in} 
\[
s_\theta\bigl(y_j \mid x\bigr) 
  =
  \log P_\theta(y_j\mid x)
  +
  \alpha |r_j - \bar r|. 
\]

\vspace{-0.05in}
Here, $\alpha$ is a hyperparameter scaling the influence of the reward, $\log P_\theta(y_j\mid x)$ is the generation probability of response $y_j$ given query $x$, and $\bar r = \text{mean}_{ y_j \in \mathcal{S}} (\mathcal{R}(x,y_j))$. \textsc{Refa} encourages the model to increase the collective preference score of responses in $\mathcal{S}^+$ relative to those in $\mathcal{S}^-$. This procedure extends to any dataset $\mathcal{D}$ by summing $L_{\textsc{Refa}}$ across all queries. Subsequent sections detail strategies for selecting $\mathcal{S}^*$ and partitioning it to maximize training efficiency and alignment quality.

\begin{algorithm}[t]
\caption{\textcolor{titlecolor}{\textbf{$\ampo$: One-Positive vs.\ $K$-Active Negatives}}}
\label{alg:one_vs_k_active}
\begin{algorithmic}[1]
    \STATE \textcolor{inputcolor}{\textbf{Input:} (1) A set of $N$ responses $\{y_i\}$ sampled from $P_{\theta}(y\mid x)$; (2) Their rewards $\{r_i\}$, embeddings $\{\mathbf{e}_i\}$, and probabilities $\{\pi_i\}$; (3) Number of negatives $K$, initial $P_\theta$, and hyperparameter $\alpha$}
    \STATE \textcolor{outputcolor}{\textbf{Output:} (i) Positive $y_{+}$; (ii) Negatives $\{y_j\}_{j \in S^-}$; (iii) Updated parameters $\theta$ via \textsc{Refa}}
    \STATE \textcolor{stepcolor}{\textit{1. Select One Positive (Highest Reward)}}
    \STATE \textcolor{mathcolor}{$i_{+} \leftarrow \arg\max_{i=1,\dots,N} r_i$, \quad $y_{+} \leftarrow y_{\,i_{+}}$}
    \STATE \textcolor{stepcolor}{\textit{2. Choose $K$ Negatives via Active Selection}}
    \STATE \textcolor{mathcolor}{$\Omega \leftarrow \{1,\dots,N\}\setminus\{i_{+}\}$}
    \STATE \textcolor{mathcolor}{$S^- \leftarrow \textsc{ActiveSelection}(\Omega,\{r_i\},\{\mathbf{e}_i\},\{\pi_i\},K)$}
    \STATE \textcolor{stepcolor}{\textit{3. Form One-vs.-$K$ \textsc{Refa} Objective}}
    \STATE \textcolor{mathcolor}{$\overline{r} \leftarrow \frac{r_{\,i_{+}} + \sum_{j\,\in\,S^-} r_j}{1 + K}$}
    \STATE For each $y_i$:
    \STATE \textcolor{mathcolor}{$s'_\theta(y_i) = \log P_\theta(y_i \mid x) + \alpha |r_i - \overline{r}|$}
    \STATE \textcolor{mathcolor}{$L_{\textsc{Refa}}(\theta) = -\log\!\Biggl(\frac{\exp\!\bigl[s'_\theta(y_{+})\bigr]}{\exp\!\bigl[s'_\theta(y_{+})\bigr] + \sum_{\,j \,\in\, S^-}\exp\!\bigl[s'_\theta(y_j)\bigr]}\Biggr)$}
    \STATE \textcolor{stepcolor}{\textit{4. Update Model Parameters:}} \textcolor{mathcolor}{$\theta \leftarrow \theta - \eta\,\nabla_\theta L_{\textsc{Refa}}(\theta)$}
    \RETURN The chosen positive $y_{+}$, the negative set $\{y_j\}_{j \in S^-}$, and the updated parameters $\theta$
\end{algorithmic}
\end{algorithm}

\section{Algorithm and Methodology}
\label{sec:methodology}


Our methodology employs a one-vs-$K$ selection scheme: one \emph{best} response is chosen as positive, and an \emph{active} subroutine selects $K$ negative responses from the remaining $N-1$ candidates. This active selection must balance three key objectives:
\vspace{-0.1in}
\begin{description}[leftmargin=1em, itemsep=0pt]
   \item[Probability:] High-probability responses under $P_\theta(y\mid x)$ can dominate even if suboptimal by reward.
   \item[Rewards:] Simply selecting extremes by reward misses problematic "mediocre" outputs.
   \item[Semantics:] Diverse but undesired responses in distant embedding regions must be penalized.
\end{description}

\vspace{-0.15in}
While positives reinforce a single high-reward candidate, active negative selection balances probability, reward and diversity to systematically suppress problematic regions of the response space.

\noindent
\textbf{Algorithm.}
Formally, let $\{y_1,\dots,y_N\}$ be the sampled responses for a single prompt $x$. Suppose we have:\\
1. A reward function $r_i = \mathcal{R}(x,y_i)\in [0,1]$.\\
2. An embedding $\mathbf{e}_i = \mathcal{E}(y_i)$.\\
3. A model probability $\pi_i = P_\theta(y_i\mid x)$.

Selection algorithms may be \textit{rating-based} selection (to identify truly poor or excellent answers) with \textit{coverage-based} selection (to explore distinct regions in the embedding space), we expose the model to both common and outlier responses. This ensures that the \textsc{Refa} loss provides strong gradient signals across the spectrum of answers the model is prone to generating. In Algorithm \ref{alg:one_vs_k_active}, $\textsc{ActiveSelection}(\cdot)$ is a generic subroutine that selects a set of $K$ “high-impact” negatives. We will detail concrete implementations (e.g.\ bottom-$K$ by rating, clustering-based, etc.) in later sections.



\begin{algorithm}[tb]
\caption{\textcolor{titlecolor}{$\ampocs$ via k-means}}
\label{alg:cluster_negatives}
\begin{algorithmic}[1]
    \STATE \textcolor{inputcolor}{\textbf{Input:}}
    \STATE \textcolor{inputcolor}{(1) $N$ responses, each with embedding $\mathbf{e}_i \in \mathbb{R}^d$ and rating $r_i$}
    \STATE \textcolor{inputcolor}{(2) Desired number of negatives $K$}
    \STATE
    \STATE \textcolor{stepcolor}{\textbf{Step 1:} \textit{Run $K$-means on embeddings}}
    \STATE \textcolor{mathcolor}{Initialize $\{\mathbf{c}_1,\dots,\mathbf{c}_K\} \subset \mathbb{R}^d$ (e.g., via $K$-means++)}
    \REPEAT
        \STATE \textcolor{mathcolor}{$\pi(i) = \arg\min_{1 \le j \le K} \|\mathbf{e}_i - \mathbf{c}_j\|^2$, \quad $i = 1,\dots,N$}
        \STATE \textcolor{mathcolor}{$\mathbf{c}_j = \frac{\sum_{i:\pi(i)=j}\mathbf{e}_i}{\sum_{i:\pi(i)=j}1}$, \quad $j = 1,\dots,K$} \label{eq:vanilla_kmeans}
    \UNTIL{convergence}
    \STATE
    \STATE \textcolor{stepcolor}{\textbf{Step 2:} \textit{In each cluster, pick the bottom-rated response}}
    \STATE \textcolor{mathcolor}{For each $j \in \{1,\dots,K\}$, define $C_j = \{\, i \mid \pi(i) = j \}$}
    \STATE \textcolor{mathcolor}{Then $i_j^- = \arg\min_{i\in C_j} r_i$, \quad $j = 1,\dots,K$}
    \STATE
    \STATE \textcolor{stepcolor}{\textbf{Step 3:} Return negatives}
    \STATE \textcolor{mathcolor}{$S^- = \{\, i_1^-,\, i_2^- ,\dots, i_K^- \}$}
    \RETURN \textcolor{outputcolor}{$S^-$ as the set of $K$ negatives}
\end{algorithmic}
\end{algorithm}

\vspace{-0.1in}
\section{Active Subset Selection Strategies}
\label{sec:subset_selection_strategies}
\vspace{-0.05in}
This section details two effective strategies for actively selecting $K$  negative responses within AMPO: \emph{\textbf{AMPO-BottomK}}, which selects the lowest-rated responses, and \emph{\textbf{AMPO-Coreset}}, a clustering-based method ensuring broad semantic coverage by selecting one negative per cluster. We connect AMPO-Coreset to coreset construction literature (Section \ref{sec:constant_factor_subset_selection}).

\vspace{-0.1in}
\subsection{AMPO-BottomK}
\label{sec:ampo_bottomk}

\vspace{-0.05in}
\noindent
\emph{AMPO-BottomK} is the most direct approach that we use for comparison: given $N$ sampled responses and their scalar ratings $\{r_i\}_{i=1}^N$, we simply pick the $K$ lowest-rated responses as negatives. This can be expressed as:

\vspace{-0.25in}
\begin{align}
\label{eq:bottomk_negatives}
S^- \;=\; \mathrm{argtopk}_{i}(-\,r_i,\,K),
\end{align}

\vspace{-0.1in}
which identifies the $K$ indices with smallest $r_i$. Although conceptually simple, this method can be quite effective when the reward function reliably indicates “bad” behavior. 
Furthermore to break-ties, we use minimal cosine similarity with the currently selected set.



\vspace{-0.1in}
\subsection{AMPO-Coreset (Clustering-Based Selection)}
\label{sec:ampo_coreset}

\vspace{-0.05in}
\noindent
$\ampobk$ may overlook problematic modes that are slightly better than the bottom-K, but fairly important to learn on. A diversity-driven approach, which we refer to as $\ampocs$, explicitly seeks coverage in the embedding space by partitioning the $N$ candidate responses into $K$ clusters and then selecting the lowest-rated response within each cluster. Formally:

\vspace{-0.15in}
\[
\label{eq:clustering_negatives}
i^-_j 
\;=\;
\arg\min_{\,i \,\in\,C_j}\; r_i, 
\,
j = 1,\dots,K, 
\,
S^- \;=\;\bigl\{\,i^-_1,\dots,i^-_K\bigr\}
\]

\vspace{-0.15in}
where $C_j$ is the set of responses assigned to cluster $j$ by a $K$-means algorithm (\citealt{har2004coresets,cohen2022improved}; see also Section \ref{sec:constant_factor_subset_selection}). The pseudo-code is provided in Algorithm \ref{alg:cluster_negatives}.


This approach enforces that each cluster---a potential ``mode'' in the response space---contributes at least one negative example. Hence, \textsc{AMPO-Coreset} can be interpreted as selecting \emph{representative} negatives from diverse semantic regions, ensuring that the model is penalized for a wide variety of undesired responses.

\begin{algorithm}[t]
\caption{\textcolor{titlecolor}{$\ampoos$ via Solving MIP}}
\label{alg:opt_select}
\begin{algorithmic}[1]
    \STATE \textcolor{inputcolor}{\textbf{Input:} Candidates $\{y_i\}_{i=1}^N$ with $r_i, \mathbf{e}_i$; integer $K$}
    \STATE \textcolor{mathcolor}{Compute $i_{\mathrm{top}} = \arg\max_i\,r_i$}
    \STATE \textcolor{mathcolor}{Let $w_i = \exp(\,\overline{r} - r_i)$ with $\overline{r}$ as mean reward}
    \STATE \textcolor{mathcolor}{Solve Problem~\eqref{eq:problem_P} to get $\{x_j^*\}, \{z_{i,j}^*\}, \{y_i^*\}$}
    \STATE \textcolor{mathcolor}{Let $S_{\mathrm{neg}} = \{\,j \mid x_j^*=1\}$ (size $K$)}
    \RETURN \textcolor{outputcolor}{$\{\,i_{\mathrm{top}}\}\cup S_{\mathrm{neg}}$ for \textsc{Refa} training}
\end{algorithmic}
\end{algorithm}

\section{Opt-Select: Active Subset Selection by Optimizing Expected Reward}
\label{sec:opt_select}

We propose \emph{Opt-Select}, a strategy for choosing $K$ negative responses and one positive to maximize expected reward under a Lipschitz assumption. Opt-Select models the local influence of penalizing negatives, formulating an optimization problem to suppress low-reward regions while preserving high-reward modes. We present solutions via mixed-integer programming (MIP) and local search.


\begin{algorithm}[t]
\caption{\textcolor{titlecolor}{$\ampoos$ via Coordinate Descent}}
\label{alg:opt_select_local_search}
\begin{algorithmic}[1]
    \STATE \textcolor{inputcolor}{\textbf{Input:} Set $I = \{1,\dots,N\}$, integer $K$, distances $A_{i,j}$, rewards $\{r_i\}$}
    \STATE \textcolor{mathcolor}{Find $i_{\mathrm{top}} = \arg\max_{i}\, r_i$}
    \STATE \textcolor{mathcolor}{Compute $w_i = \exp(\,\overline{r} - r_i)$ and $d_{i,j}=A_{i,j}$}
    \STATE \textcolor{mathcolor}{Initialize a random subset $S \subseteq I\setminus\{i_{\mathrm{top}}\}$ of size $K$}
    \WHILE{improving}
        \STATE \textcolor{mathcolor}{Swap $j_{\mathrm{out}} \in S$ with $j_{\mathrm{in}} \notin S$ if it decreases $\sum_{i \in I} w_i\,\min_{j \in S} d_{i,j}$}
    \ENDWHILE
    \RETURN \textcolor{outputcolor}{$S_{\mathrm{neg}}=S$ (negatives) and $i_{\mathrm{top}}$ (positive)}
\end{algorithmic}
\end{algorithm}

\subsection{Lipschitz-Driven Objective}
\label{subsec:lipschitz_objective}

Let $\{y_i\}_{i=1}^n$ be candidate responses sampled on-policy, each with reward $r_i \in [0,1]$ and embedding $\mathbf{e}_i \in \mathbb{R}^d$. Suppose that if we \emph{completely suppress} a response $y_j$ (i.e.\ set its probability to zero), all answers within distance $\|\mathbf{e}_i - \mathbf{e}_j\|$ must also decrease in probability proportionally, due to a Lipschitz constraint on the policy. Concretely, if the distance is $d_{i,j} = \|\mathbf{e}_i - \mathbf{e}_j\|$, and the model’s Lipschitz constant is $L$, then the probability of $y_i$ cannot remain above $L\,d_{i,j}$ if $y_j$ is forced to probability zero.

From an \emph{expected reward} perspective, assigning zero probability to \emph{low-reward} responses (and their neighborhoods) improves overall alignment. To capture this rigorously, observe that the \emph{penalty} from retaining a below-average answer $y_i$ can be weighted by:
\begin{align}
\label{eq:weight_w_i}
    w_i 
    \;=\;
    \exp\bigl(\,\overline{r} \;-\; r_i\bigr),
\end{align}
where $\overline{r}$ is (for instance) the mean reward of $\{r_i\}$. Intuitively, $w_i$ is larger for lower-reward $y_i$, indicating it is more harmful to let $y_i$ and its neighborhood remain at high probability.

Next, define a distance matrix 
\begin{align}
\label{eq:distance_matrix}
  A_{i,j} \;=\;
  \bigl\|\mathbf{e}_i - \mathbf{e}_j\bigr\|_2,
  \quad
  1 \le i,j \le N.
\end{align}
Selecting a subset $S\subseteq \{1,\dots,N\}$ of “negatives” to penalize suppresses the probability of each $i$ in proportion to $\min_{j \in S} A_{i,j}$. Consequently, a natural \emph{cost} function measures how much “weighted distance” $y_i$ has to its closest chosen negative:
\begin{align}
\label{eq:weighted_distance_cost}
    \text{Cost}(S)
    \;=\;
    \sum_{i=1}^N 
    w_i 
    \;\min_{\,j \in S}\;
    A_{i,j}.
\end{align}
Minimizing \eqref{eq:weighted_distance_cost} yields a subset $S$ of size $K$ that “covers” or “suppresses” as many low-reward responses (large $w_i$) as possible. We then \emph{add} one \emph{positive} index $i_{\mathrm{top}}$ with the highest $r_i$ to amplify a top-quality answer. This combination of \emph{one positive} plus \emph{$K$ negatives} provides a strong signal in the training loss.

\paragraph{Interpretation and Connection to Weighted k-medoids.}
If each negative $j$ “covers” responses $i$ within some radius (or cost) $A_{i,j}$, then \eqref{eq:weighted_distance_cost} is analogous to a weighted \emph{$K$-medoid} objective, where we choose $K$ items (negatives) to minimize a total weighted distance. Formally, this can be cast as a mixed-integer program (MIP) (Problem~\ref{eq:problem_P} below). For large $N$, local search offers an efficient approximation.

\subsection{Mixed-Integer Programming Formulation}

Define binary indicators $x_j = 1$ if we choose $y_j$ as a negative, and $z_{i,j} = 1$ if $i$ is assigned to $j$ (i.e.\ $\min_{j\in S} A_{i,j}$ is realized by $j$). We write:

\vspace{-0.15in}
\begin{align}
\label{eq:problem_P}
\textbf{Problem } \mathcal{P}: \quad
&\min_{\substack{x_j \in \{0,1\},\ z_{i,j}\in\{0,1\},\ y_i\ge 0}} 
 \sum_{i=1}^N w_i \,y_i 
\\
\text{s.t.}\quad
& \sum_{j=1}^N x_j = K, 
z_{i,j}\le x_j,
\sum_{j=1}^N z_{i,j} = 1, \forall\,i,\nonumber\\
& y_i \le A_{i,j} + M\,(1 - z_{i,j}), \nonumber\\
&y_i \ge A_{i,j} - M\,(1 - z_{i,j}),\quad\forall\,i,j,
\end{align}

\vspace{-0.1in}
where $M=\max_{i,j} A_{i,j}$. In essence, each $i$ is forced to \emph{assign} to exactly one chosen negative $j$, making $y_i = A_{i,j}$, i.e. the distance between the answer embeddings for answer $\{i,j\}$. Minimizing $\sum_i w_i\,y_i$ (i.e.\ \eqref{eq:weighted_distance_cost}) then ensures that low-reward points ($w_i$ large) lie close to at least one penalized center.

\vspace{-0.1in}
\paragraph{Algorithmic Overview.}
Solving $\mathcal{P}$ gives the $K$ negatives $S_{\mathrm{neg}}$, while the highest-reward index $i_{\mathrm{top}}$ is chosen as a positive. The final subset $\{i_{\mathrm{top}}\}\cup S_{\mathrm{neg}}$ is then passed to the \textsc{Refa} loss (see Section \ref{sec:methodology}). Algorithm~\ref{alg:opt_select} outlines the procedure succinctly.


\vspace{-0.1in}
\subsection{Local Search Approximation}

\vspace{-0.1in}
For large $N$, an exact MIP can be expensive. A simpler \emph{local search} approach initializes a random subset $\mathcal{S}$ of size $K$ and iteratively swaps elements in and out if it lowers the cost \eqref{eq:weighted_distance_cost}. In practice, this provides an efficient approximation, especially when $N$ or $K$ grows.


\paragraph{Intuition.}
If $y_i$ is far from all penalized points $j\in S$, then it remains relatively “safe” from suppression, which is undesirable if $r_i$ is low (i.e.\ $w_i$ large). By systematically choosing $S$ to reduce $\sum_i w_i\,\min_{j\in S}d_{i,j}$, we concentrate penalization on high-impact, low-reward regions. The local search repeatedly swaps elements until no single exchange can further reduce the cost.

\subsection{Why ``Opt-Select''? A Lipschitz Argument for Expected Reward}

We name the procedure ``Opt-Select'' because solving \eqref{eq:problem_P} (or its local search variant) directly approximates an \emph{optimal} subset for improving the policy's expected reward. Specifically, under a Lipschitz constraint with constant $L$, assigning zero probability to each chosen negative $y_j$ implies \emph{neighboring answers} $y_i$ at distance $d_{i,j}$ cannot exceed probability $L\,d_{i,j}$. Consequently, their contribution to the ``bad behavior'' portion of expected reward is bounded by
\[
   \exp\bigl(r_{\max} - r_i\bigr)\,\bigl(\,L\,d_{i,j}\bigr),
\]
where $r_{\max}$ is the rating of the best-rated response. Dividing by a normalization factor (such as $\exp(r_{\max} - \overline{r})\,L$), one arrives at a cost akin to $w_i\, d_{i,j}$ with $w_i = \exp(\overline{r}-r_i)$. 

\begin{remark}
    This aligns with classical \emph{min-knapsack} of minimizing some costs subject to some constraints, and has close alignment with the \emph{weighted $K$-medoid} notions of “covering” important items at minimum cost. 
\end{remark}

\begin{table*}[!tbph]
\centering
\label{tab:results}
\resizebox{0.9\textwidth}{!}{
\begin{tabular}{@{}lcccccccc@{}}
\toprule
\multirow{4}{*}{\textbf{Method}} & \multicolumn{4}{c}{\textbf{Mistral-Instruct (7B)}} & \multicolumn{4}{c}{\textbf{Llama-3-Instruct (8B)}} \\ \cmidrule(lr){2-5} \cmidrule(lr){6-9}
 & \multicolumn{2}{c}{\textbf{AlpacaEval 2}} & \multicolumn{1}{c}{\textbf{Arena-Hard}} & \multicolumn{1}{c}{\textbf{MT-Bench}} & \multicolumn{2}{c}{\textbf{AlpacaEval 2}} & \multicolumn{1}{c}{\textbf{Arena-Hard}} & \multicolumn{1}{c}{\textbf{MT-Bench}} \\ \cmidrule(lr){2-3} \cmidrule(lr){4-4} \cmidrule(lr){5-5} \cmidrule(lr){6-7} \cmidrule(lr){8-8} \cmidrule(lr){9-9}
 & \textbf{LC (\%)} & \textbf{WR (\%)} & \textbf{WR (\%)} & \textbf{GPT-4} & \textbf{LC (\%)} & \textbf{WR (\%)} & \textbf{WR (\%)} & \textbf{GPT-4} \\ \midrule
Base & 17.1 & 14.7 & 12.6 & 7.5 & 28.4 & 28.4& 26.9 & 7.93 \\
\midrule
RRHF\footnotemark[1] & 25.3 & 24.8 & 18.1 & 7.6 & 31.3 & 28.4 & 26.5 & 7.9 \\
SLiC-HF\footnotemark[1] & 24.1 & 24.6 & 18.9 & \textbf{7.8} & 26.9 & 27.5 & 26.2 & 8.1 \\
DPO\footnotemark[1] & 26.8 & 24.9 & 16.3 & 7.6 & 40.3 & 37.9 & 32.6 & 8.0 \\
IPO\footnotemark[1] & 20.3 & 20.3 & 16.2 & \textbf{7.8} & 35.6 & 35.6 & 30.5 & \textbf{8.3} \\
CPO\footnotemark[1] & 23.8 & 28.8 & 22.6 & 7.5 & 28.9 & 32.2 & 28.8 & 8.0 \\
KTO\footnotemark[1] & 24.5 & 23.6 & 17.9 & 7.7 & 33.1 & 31.8 & 26.4 & 8.2 \\
ORPO\footnotemark[1] & 24.5 & 24.9 & 20.8 & 7.7 & 28.5 & 27.4 & 25.8 & 8.0 \\
R-DPO\footnotemark[1] & 27.3 & 24.5 & 16.1 & 7.5 & 41.1 & 37.8 & 33.1 & 8.0 \\
\midrule
$\simpo$ & 30.1 & 32.3 & 21.1 & 7.6 & 47.6 & 44.7 & 34.9 & 7.5 \\
$\ampo$-BottomK & 32.1 & 37.0 & 22.1 & 7.7 & 50.8 & 50.5 & 45.2 & 8.1 \\
$\ampo$-Coreset & \underline{32.8} & \underline{37.3} & \underline{22.6} & \textbf{7.8} & \textbf{52.4} & \textbf{52.1} & \textbf{47.8} & 8.1 \\
$\ampo$-Opt-Select & \textbf{33.1} & \textbf{37.8} & \textbf{22.8} & \underline{7.7} & \underline{51.6} & \underline{51.2} & \underline{46.4} & 8.0 \\
\midrule
\end{tabular}
}
\caption{Comparison of various preference optimization baselines on AlpacaEval, Arena-Hard, and MT-Bench benchmarks for Llama-3-Instruct (8B). LC-WR represents length-controlled win rate, and WR represents raw win rate. Best results are in \textbf{bold}, second-best are \underline{underlined}. Our method ($\ampo$) achieves SOTA performance across all metrics, with different variants achieving either best or second-best results consistently.}
\footnotetext[1]{These are taken directly from the paper SimPO \citep{meng2024simpo}}
\label{tab:ampo-Results-comparison}
\end{table*}

\vspace{-0.1in}
\section{Theoretical Results: Key Results}
\label{sec:theory_main}

\vspace{-0.05in}
This section presents core theoretical statements underpinning AMPO's active selection. Full proofs are in Appendices~\ref{sec:theory_opt_select_extended}--\ref{sec:constant_factor_subset_selection}. We assume a budget of $K$ responses to be selected from $N$ candidates.

\vspace{-0.15in}
\subsection{Setup and Assumptions}

\vspace{-0.05in}
\paragraph{(A1) $L$-Lipschitz Constraint.}
When a response $y_j$ is penalized (probability $p_j = 0$), any other response $y_i$ within embedding distance $A_{i,j}$ must satisfy $p_i \le L \, A_{i,j}$. 

\vspace{-0.1in}
\paragraph{(A2) Single Positive Enforcement.}
We allow one highest-reward response $y_{i_{\mathrm{top}}}$ to be unconstrained, i.e.\ $p_{i_{\mathrm{top}}}$ is not pulled down by the negatives. 

\vspace{-0.1in}
\paragraph{(A3) Finite Support.}
We focus on a finite set of $N$ candidate responses $\{y_1,\dots,y_N\}$ and their scalar rewards $\{r_i\}$, each embedded in $\mathbb{R}^d$ with distance $A_{i,j} = \|\mathbf{e}_i-\mathbf{e}_j\|$.

\vspace{-0.1in}
\subsection{Optimal Negatives via Coverage}

\begin{theorem}[Optimality of \textsc{Opt-Select}]
\label{thm:opt_select_main}
Under assumptions (A1)--(A3), let $\mathcal{S}^*$ be the set of $K$ “negative” responses that \emph{minimizes} the coverage cost

\vspace{-0.2in}
\begin{equation}
      \mathrm{Cost}(\mathcal{S})
  \;=\;
  \sum_{i=1}^N
    \exp(\,\overline{r}-r_i)
    \;\min_{j\in \mathcal{S}}
       A_{i,j},
\end{equation}

where $\overline{r}$ is a reference reward (e.g.\ average of $\{r_i\}$). Then $\mathcal{S}^*$ also \emph{maximizes} the expected reward among all Lipschitz-compliant policies of size $K$ (with a single positive). Consequently, selecting $\mathcal{S}^*$ and allowing $p_{i_{\mathrm{top}}} \approx 1$ is optimal.
\end{theorem}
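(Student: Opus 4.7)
My strategy is to show that, for any fixed negative set $\mathcal{S}$ of size $K$ together with the prescribed positive $i_{\mathrm{top}}$, the expected-reward deficit of every admissible policy is governed by a monotone function of $\mathrm{Cost}(\mathcal{S})$, with the extremal admissible policy saturating the bound. Minimizing $\mathrm{Cost}(\mathcal{S})$ over $|\mathcal{S}|=K$ then maximizes expected reward. The argument proceeds in three steps, and I expect step two to be the main obstacle.

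First, I would fix $\mathcal{S}$ and the prescribed positive $i_{\mathrm{top}}$ and characterize the admissible policy class. By (A1), each non-positive $y_i$ obeys $p_i \le L\,\min_{j\in\mathcal{S}} A_{i,j}$; by (A2), $p_{i_{\mathrm{top}}}$ is exempt; by (A3), finite support lets me write the reward deficit as $r_{i_{\mathrm{top}}} - \mathbb{E}[r] = \sum_{i\ne i_{\mathrm{top}}} p_i \bigl(r_{i_{\mathrm{top}}} - r_i\bigr)$. Saturating every Lipschitz cap yields an extremal-policy deficit proportional to a weighted covering sum over $\{\min_{j\in\mathcal{S}} A_{i,j}\}_i$, which already has the shape of $\mathrm{Cost}(\mathcal{S})$ up to the choice of per-index weight.

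Second, I would lift the linear weight $(r_{i_{\mathrm{top}}}-r_i)$ to the exponential weight $w_i = \exp(\bar{r}-r_i)$ from \eqref{eq:weight_w_i}. Following the softmax-style badness surrogate $\sum_i p_i \exp(r_{\max}-r_i)$ motivated in Section~\ref{sec:opt_select}, the Lipschitz ceiling gives $L \sum_i \exp(r_{\max}-r_i)\,\min_{j\in\mathcal{S}} A_{i,j}$, which upon dividing by the normalization $L\exp(r_{\max}-\bar{r})$ collapses exactly to $\mathrm{Cost}(\mathcal{S})$. This produces a single-shot identity of the form
\begin{equation*}
\max_{P\text{ admissible}} \mathbb{E}_P[r] \;=\; r_{\max} \;-\; L\,\exp(r_{\max}-\bar{r})\;\mathrm{Cost}(\mathcal{S}),
\end{equation*}
strictly monotone-decreasing in $\mathrm{Cost}(\mathcal{S})$.

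Third, since this identity holds uniformly for every $\mathcal{S}$ of size $K$, the coverage-cost minimizer $\mathcal{S}^*$ simultaneously maximizes the admissible expected reward; pairing $\mathcal{S}^*$ with $p_{i_{\mathrm{top}}}\approx 1$ saturates the bound, recovering the theorem. The hard part is step two: cleanly justifying the exponential weight in place of the raw linear gap. I expect to discharge it either by reformulating the objective in a KL-regularized form (where exponential reshaping of rewards is canonical) or by invoking $r_i \in [0,1]$ together with a moment-generating-function argument that dominates the linear gap by its exponential counterpart up to a problem-independent constant; either route preserves the \emph{minimize coverage $\Leftrightarrow$ maximize expected reward} equivalence required for the claim.
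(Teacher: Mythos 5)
Your Step 1 is exactly the right move and matches the paper's actual argument: fix $\mathcal{S}$, saturate the Lipschitz caps, and use $\sum_i p_i = 1$ to write $\sum_i r_i p_i = r_{\max} - \sum_i (r_{\max}-r_i)\,p_i$, so that the maximal feasible reward equals $r_{\max} - L\sum_i (r_{\max}-r_i)\min_{j\in\mathcal{S}}A_{i,j}$ exactly (this is the paper's Lemma~\ref{lem:coverage_negative}, which works throughout with the \emph{linear} weight $w_i = r_{\max}-r_i$ and closes the argument there).

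The genuine gap is your Step 2. The displayed ``single-shot identity'' $\max_P \mathbb{E}_P[r] = r_{\max} - L\exp(r_{\max}-\bar r)\,\mathrm{Cost}(\mathcal{S})$ is false: the exact deficit is the linearly weighted sum $\sum_i(r_{\max}-r_i)p_i$, and there is no algebraic identity converting it into $\sum_i \exp(\bar r - r_i)p_i$ up to a multiplicative constant. Neither of your proposed rescues closes this. A KL-regularized reformulation changes the objective being optimized, and a moment-generating-function bound of the form $r_{\max}-r_i \le C\,\exp(\bar r - r_i)$ (valid since $r_i\in[0,1]$) only yields a one- or two-sided \emph{domination} between the two cost functions; domination up to constants does not transfer argmins, so the minimizer of the exponential-weighted cost need not minimize the linear-weighted cost, and hence need not maximize expected reward. (Monotone order-equivalence of the two weight vectors does not help either, because $\mathrm{Cost}$ is a weighted sum, not a ranking.) To make your proof go through you should either (i) prove the theorem with $w_i = r_{\max}-r_i$, where the equivalence is an exact identity --- which is what the paper's appendix actually does, the exponential weight in the main-text statement being a surrogate the appendix never reconciles --- or (ii) supply a genuinely new argument showing the two cost functions share a minimizer, which will not follow from domination inequalities alone.
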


\paragraph{Sketch of Proof.}
(See Appendix~\ref{sec:theory_opt_select_extended} for details.) We show a one-to-one correspondence between minimizing coverage cost $\sum_i w_i \min_{j\in \mathcal{S}} A_{i,j}$ and maximizing the feasible expected reward $\sum_i r_i p_i$ under the Lipschitz constraint. Low-reward responses with large $w_i$ must lie close to at least one negative $j\in \mathcal{S}$; else, they are not sufficiently suppressed. A mixed-integer program encodes this cost explicitly, and solving it yields the unique $\mathcal{S}^*$ that maximizes reward.

\subsection{Local Search for Weighted $K$-Medoids}
\label{subsec:local_search_statement}

\paragraph{(A4) Weighted $K$-Medoids Setup.}
We have $N$ points $\{1,\dots,N\}$ in a metric space with distance $d(\cdot,\cdot)\ge0$, each with weight $w_i\ge0$. Our goal is to find a subset $\mathcal{S}$ of size $K$ to minimize $\mathrm{Cost}\; \mathcal{S} \;=\;  \sum_{i=1}^N w_i \,\min_{j\in \mathcal{S}} d(i,j).$
  

\begin{theorem}[Local Search Approximation]
\label{thm:local_search_main}
Suppose we apply a $1$-swap local search algorithm to select $K$ medoids. Let $\widehat{\mathcal{S}}$ be the resulting local optimum and let $\mathcal{S}^*$ be the globally optimal subset. Then

\vspace{-0.1in}
\[
  \mathrm{Cost}\bigl(\widehat{\mathcal{S}}\bigr)
  \;\le\;
  5
  \,\times\,
  \mathrm{Cost}\bigl(\mathcal{S}^*\bigr).
\]
\end{theorem}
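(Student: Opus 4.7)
The plan is to adapt the classical single-swap local-search analysis of Arya, Garg, Khandekar, Meyerson, Munagala, and Pandit (for unweighted $k$-medians) to the weighted setting of (A4). The argument has four stages: notation setup, construction of a family of $K$ swap pairs between $\mathcal{S}^*$ and $\widehat{\mathcal{S}}$, a per-swap inequality combining triangle-inequality rerouting with local optimality, and a final aggregation with careful multiplicity accounting. For each point $i$ let $\hat{\sigma}(i)\in\widehat{\mathcal{S}}$ and $\sigma^*(i)\in\mathcal{S}^*$ be its nearest medoid in each solution, with $c_i=d(i,\hat{\sigma}(i))$ and $o_i=d(i,\sigma^*(i))$, so that $\mathrm{Cost}(\widehat{\mathcal{S}})=\sum_i w_i c_i$ and $\mathrm{Cost}(\mathcal{S}^*)=\sum_i w_i o_i$. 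Define $\pi:\mathcal{S}^*\to\widehat{\mathcal{S}}$ by $\pi(s^*)=\arg\min_{\hat{s}\in\widehat{\mathcal{S}}}d(s^*,\hat{s})$, and call $\hat{s}$ \emph{captured} if $|\pi^{-1}(\hat{s})|=1$.

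Next I will build $K$ swap pairs $(\hat{s}_\ell,s^*_\ell)$ with two properties: each $s^*\in\mathcal{S}^*$ appears in exactly one pair, and each $\hat{s}\in\widehat{\mathcal{S}}$ appears in at most two. Pair each captured $\hat{s}$ with its unique preimage under $\pi$; pair every remaining $s^*$ (whose image is ``overloaded'', meaning $|\pi^{-1}(\pi(s^*))|\ge 2$) with an uncaptured local medoid (preimage size $0$), reusing each uncaptured medoid at most twice. This pairing is feasible because the number of uncaptured medoids $|U|$ and the number of overloaded ones $m$ satisfy $2m\le K-|C|$ (each overloaded medoid absorbs at least two optimal medoids), which forces $2|U|\ge K-|C|$. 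The critical safety property $\pi(\sigma^*(i))\ne\hat{s}$ for every rerouted $i$ holds by construction: in the captured case $\pi^{-1}(\hat{s})=\{s^*\}$ excludes all other optima, and in the uncaptured case $\pi^{-1}(\hat{s})=\emptyset$ excludes all optima.

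For each swap $(\hat{s},s^*)$ I then bound the post-swap cost by a specific feasible reassignment: send each $i\in\sigma^{*-1}(s^*)$ to $s^*$ (cost $w_i o_i$) and reroute each $i\in\hat{\sigma}^{-1}(\hat{s})\setminus\sigma^{*-1}(s^*)$ to $\pi(\sigma^*(i))$, which remains in the new solution by the safety property. Two applications of the triangle inequality together with the defining property of $\pi$ give $d(i,\pi(\sigma^*(i)))\le d(i,\sigma^*(i))+d(\sigma^*(i),\hat{\sigma}(i))\le 2o_i+c_i$. Since $\widehat{\mathcal{S}}$ is a $1$-swap local optimum, the new total cost is at least $\mathrm{Cost}(\widehat{\mathcal{S}})$, yielding the per-swap inequality
\[
0 \;\le\; \sum_{i\in\sigma^{*-1}(s^*)} w_i(o_i-c_i) \;+\; \sum_{i\in\hat{\sigma}^{-1}(\hat{s})\setminus\sigma^{*-1}(s^*)} 2w_i o_i.
\]
Summing the $K$ such inequalities, each $s^*$ appears once (so the $w_i c_i$ and $w_i o_i$ terms aggregate to $\mathrm{Cost}(\widehat{\mathcal{S}})$ and $\mathrm{Cost}(\mathcal{S}^*)$ respectively), while each $\hat{s}$ appears at most twice (bounding the reroute term by $4\,\mathrm{Cost}(\mathcal{S}^*)$). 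Rearranging yields $\mathrm{Cost}(\widehat{\mathcal{S}})\le 5\,\mathrm{Cost}(\mathcal{S}^*)$.

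The main obstacle is the combinatorial swap construction: simultaneously guaranteeing that each $\hat{s}$ is reused at most twice \emph{and} preserving the safety property $\pi(\sigma^*(i))\ne\hat{s}$ for every rerouted point, which is what forces the specific captured/uncaptured split above. Adding weights is painless because they enter each per-swap inequality only as scalar multipliers on terms indexed by $i$; no property of $d$ beyond the triangle inequality is needed, and no assumption beyond $w_i\ge 0$ is used.
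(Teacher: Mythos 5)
Your proof is correct, and it takes a genuinely different route from the one in the paper. The paper argues by contradiction: it assumes $\mathrm{Cost}(\widehat{\mathcal{S}})>5\,\mathrm{Cost}(\mathcal{S}^*)$, locates a single optimal center $j^*$ whose cluster contributes more than $4\,c(\mathcal{S}^*)/K$ to the gap, and shows that swapping $j^*$ in would strictly decrease cost; crucially, its bound on the ``out-loss'' $\Delta_{\mathrm{out}}\le c(\mathcal{S}^*)/K$ is not derived but deferred to \citet{arya2001local} in a remark, so the paper's argument is a guided sketch rather than a complete proof. You instead give the direct Arya et al.\ (Gupta--Tangwongsan style) analysis: build $K$ test swaps via the nearest-medoid map $\pi$ with the captured/overloaded/uncaptured trichotomy, extract one inequality per swap from local optimality using the explicit reassignment $d(i,\pi(\sigma^*(i)))\le 2o_i+c_i$, and sum with the multiplicity bound (each $s^*$ once, each $\hat{s}$ at most twice) to get $0\le \mathrm{Cost}(\mathcal{S}^*)-\mathrm{Cost}(\widehat{\mathcal{S}})+4\,\mathrm{Cost}(\mathcal{S}^*)$. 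Your counting ($2m\le K-|C|$ hence $2|U|\ge K-|C|$), the safety property guaranteeing the reroute target survives the swap, and the observation that weights enter only as linear multipliers are all correct. What your route buys is completeness and self-containment --- it supplies exactly the triangle-inequality bookkeeping the paper's remark omits; what the paper's route buys is brevity and an intuitive ``profitable swap must exist'' narrative. One small technicality you should add: if some $s^*\in\mathcal{S}^*$ already lies in $\widehat{\mathcal{S}}$, the corresponding ``swap'' may be degenerate, but the per-swap inequality still holds trivially since the reassignment remains feasible in the unchanged set.
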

\vspace{-0.15in}
The running time is polynomial in $N$ and $K$. 

\vspace{-0.15in}
\paragraph{Sketch of Proof.}
(See Appendix~\ref{sec:local_search_kmedoids} for a complete proof.) Assume by contradiction that $\mathrm{Cost}(\widehat{\mathcal{S}}) > 5\, \mathrm{Cost}(\mathcal{S}^*)$. We then show there exists a profitable swap (removing some $j\in\widehat{\mathcal{S}}$ and adding $j^*\in\mathcal{S}^*$) that strictly decreases cost, contradicting the local optimality of $\widehat{\mathcal{S}}$.

\vspace{-0.05in}
\subsection{Coreset Guarantee for AMPO-Coreset}
\label{subsec:coreset_bounded_statement}
\textbf{(A5) Bounded-Diameter Clusters:} For $\ampocs$, we assume the $N-1$ non-positive candidate responses can be grouped into $K$ semantic clusters, each with an embedding-space diameter at most $d_{\max}$.

\textbf{Intuition:} $\ampocs$ selects one lowest-rated negative from each of the $K$ semantic clusters. Under the Lipschitz constraint (A1), penalizing this single representative from a bounded-diameter cluster (A5) effectively suppresses all other semantically similar (i.e., same-cluster) responses. This ensures broad coverage across the response landscape.



\textbf{Formal Result:} (Theorem \ref{thm:additive_Ld_bound}, Appendix \ref{sec:constant_factor_subset_selection}).  The induced policy’s maximum expected reward is at least 
\begin{equation}
  r_{\max}\;-\;L\,d_{\max},
\end{equation}
i.e.\ within additive $Ld_{\max}$ of the unconstrained optimum given assumptions on cluster diameter ($d_{\max}$) and the policy's smoothness ($L$).

\section{Experiments}
\label{sec:experiments_main}

\subsection{Experimental Setup}
\label{sec:experimental setup}

\begin{figure*}[!t]
    \centering
    \includegraphics[width=1.0\textwidth]{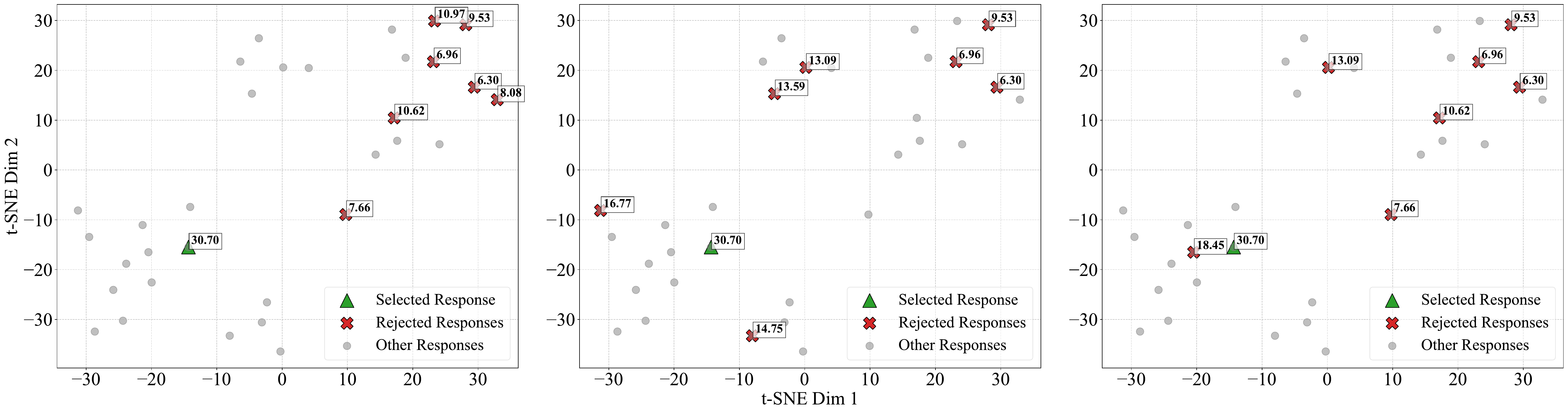}
    \vspace{-0.25in}
    \caption{t-SNE visualization of projected high-dimensional response embeddings into a 2D space, illustrating the separation of actively selected responses. (a) AMPO-BottomK (baseline). (b) AMPO-Coreset (ours). (c) Opt-Select (ours). We see that the traditional baselines select many responses close to each other, based on their rating. This provides insufficient feedback to the LLM during preference optimization. In contrast, our methods simultaneously optimize for objectives including coverage, generation probability as well as preference rating.\vspace{-0.15in}}
\label{fig:tsne_analysis}
\end{figure*}

\paragraph{Model and Training Settings:}
For our experiments, we utilize a pretrained instruction-tuned model (\href{https://huggingface.co/meta-llama/Meta-Llama-3-8B-Instruct}{meta-llama/MetaLlama-3-8B-Instruct}), as the SFT model. These models have undergone extensive instruction tuning, making them more capable and robust compared to the SFT models used in the Base setup. However, their reinforcement learning with human feedback (RLHF) procedures remain undisclosed, making them less transparent.

To reduce distribution shift between the SFT models and the preference optimization process, we follow the approach in \cite{tran2023iterative}  and generate the preference dataset using the same SFT models. This ensures that our setup is more aligned with an on-policy setting. Specifically, we utilize prompts from the UltraFeedback dataset \cite{cui2023ultrafeedback} and regenerate the resonses using the SFT models. For each prompt x, we produce 32 responses by sampling from the SFT model with a sampling temperature of 0.8. We then use the reward model (\href{https://huggingface.co/Skywork/Skywork-Reward-Llama-3.1-8B-v0.2}{Skywork/Skywork-Reward-Llama-3.1-8B-v0.2}) \cite{liu2024skywork} to score all the 32 responses. Then the response are selected based on the Active Subset selection strategies a.) \textbf{AMPO-Bottomk} b.) \textbf{AMPO-Coreset} c.) \textbf{AMPO-Opt-Select}

In our experiments, we observed that tuning hyperparameters is critical for optimizing the performance . Carefully selecting hyperparameter values significantly impacts the effectiveness of these methods across various datasets.We found that setting the $\beta$ (inverse temperature) parameter in the range of 5.0 to 10.0 consistently yields strong performance, while tuning the $\gamma$ parameter within the range of 2 to 4 further improved performance. These observations highlight the importance of systematic hyperparameter tuning to achieve reliable outcomes across diverse datasets.

\vspace{-0.1in}
\paragraph{Evaluation Benchmarks}
We evaluate our models using three widely recognized open-ended instruction-following benchmarks: MT-Bench \cite{zheng2023judging}, AlpacaEval2 \cite{dubois2024length}, and Arena-Hard v0.1. These benchmarks are commonly used in the community to assess the conversational versatility of models across a diverse range of queries.

AlpacaEval 2 comprises 805 questions sourced from five datasets, while MT-Bench spans eight categories with a total of 80 questions. The recently introduced Arena-Hard builds upon MT-Bench, featuring 500 well-defined technical problem-solving queries designed to test more advanced capabilities.

We adhere to the evaluation protocols specific to each benchmark when reporting results. For AlpacaEval 2, we provide both the raw win rate (WR) and the length-controlled win rate (LC), with the latter being designed to mitigate the influence of model verbosity. For Arena-Hard, we report the win rate (WR) against a baseline model. For MT-Bench, we present the scores as evaluated by GPT-4-Preview-1106, which serve as the judge model.

\subsection{Experimental Result}

\paragraph{Impact of Selection Strategies on Diversity.}
Figure~\ref{fig:tsne_analysis} shows a t-SNE projection of response embeddings, highlighting how each selection method samples the answer space:\\
\textbf{AMPO-BottomK}: Tends to pick a tight cluster of low-rated responses, limiting coverage and redundancy in feedback.\\
\textbf{AMPO-Coreset}: Uses coreset-based selection to cover more diverse regions, providing coverage of examples.\\
\textbf{Opt-Select}: Further balances reward extremity, and embedding coverage, yielding well-separated response clusters and more effective supervision for preference alignment.


\vspace{0.05in}
\begin{takeawaybox}
\textbf{Key Takeaway:} Figure \ref{fig:tsne_analysis} demonstrates that our selection strategies significantly improve response diversity compared to traditional baselines. By actively optimizing for coverage-aware selection, our methods mitigate redundancy in selected responses, leading to better preference modeling and enhanced LLM alignment.
\end{takeawaybox}

\vspace{0.05in}
\paragraph{Impact of Temperature Sampling for Different Active Selection Approaches}
To analyze the impact of temperature-controlled response sampling on different active selection approaches, we conduct an ablation study by varying the sampling temperature from 0 to 1.0 in increments of 0.25 on AlpacaEval2 benchmark as demonstrated in Figure \ref{fig:samp-temp-analysis}. We evaluate our active selection strategies observe a general trend of declining performance with increasing temperature. 

\vspace{0.05in}
\begin{takeawaybox}
\textbf{Key Takeaway:} \ampo-Coreset and \ampo-Opt-Select demonstrate robustness to temperature variations, whereas LC-WR of SimPO and bottom-k selection are more sensitive.
\end{takeawaybox}

\vspace{-0.05in}
\paragraph{Effect of gamma for Active Selection Approaches}

To investigate the sensitivity of core-set selection to different hyper-parameter settings, we conduct an ablation study on the impact of varying the gamma as shown in Figure \ref{fig:samp-gamma-analysis}. As gamma increases from 1 to 3, we observe a consistent improvement in both LC-WR and WR scores. 

\vspace{0.05in}
\begin{takeawaybox}
\textbf{Key Takeaway:} This highlights the importance of tuning gamma appropriately to maximize the effectiveness of active-selection approaches.
\end{takeawaybox}

\vspace{0.05in}
\begin{figure}[!t]
    \centering
\includegraphics[width=1.0\columnwidth]{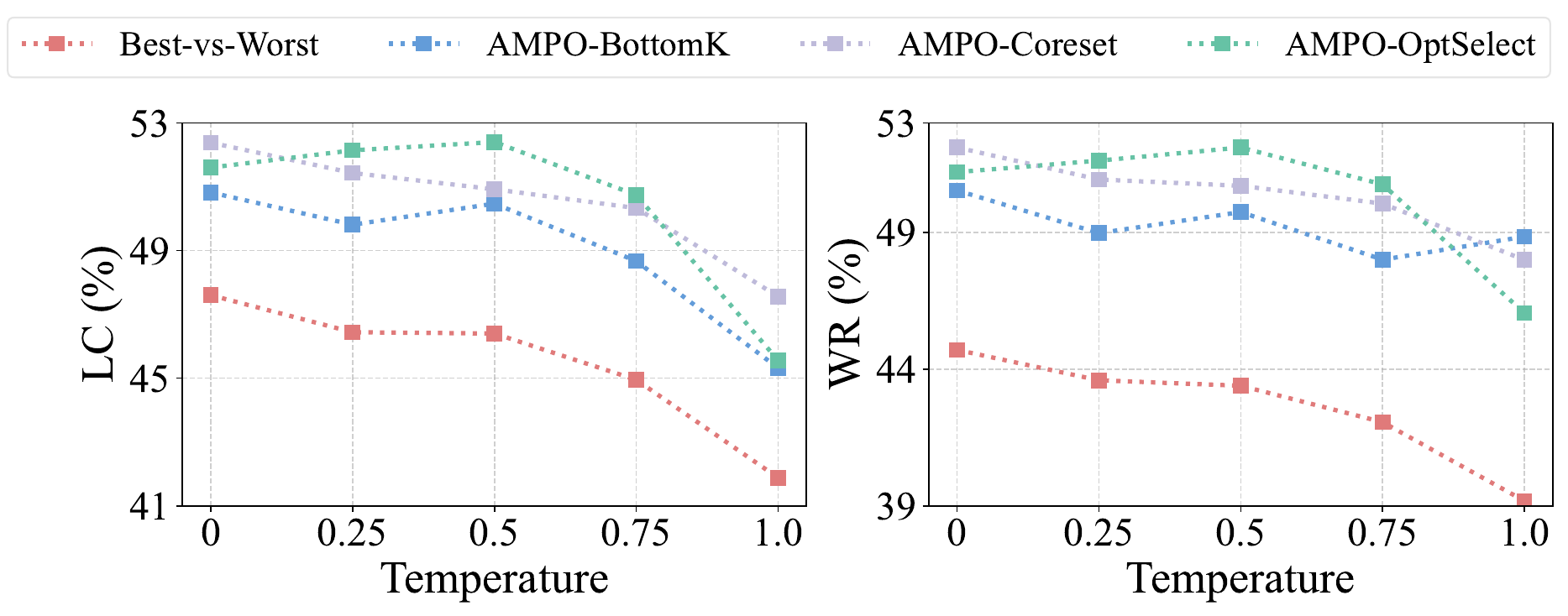}
    \vspace{-0.25in}
    \caption{Effect of Sampling Temperature on different baselines for on the AlpacaEval 2 Benchmark: (a) Length-Controlled Win Rate (LC) and (b) Overall Win Rate (WR).\vspace{-0.15in}}
    \label{fig:samp-temp-analysis}
\end{figure}

\vspace{-0.1in}
\paragraph{Robustness to Reward Model Choice}
To assess the robustness of $\ampo$ to the choice of reward model, we evaluate performance using two distinct reward models: Skywork-Reward-LM and GRM-Reward-LM. Table~\ref{tab:rm_choice table} presents results across three $\ampo$ selection strategies—Bottom-$k$, Coreset, and Opt-Select—on AlpacaEval 2, Arena-Hard, and MT-Bench.

\begin{table}[!tbph]
\centering
\resizebox{1.0\columnwidth}{!}{
\begin{tabular}{@{}lccr@{}}
\toprule
\multirow{2}{*}{\textbf{Method}} & \multirow{2}{*}{\textbf{Reward Model}} & \multicolumn{2}{c}{\textbf{AlpacaEval 2}} \\
\cmidrule(lr){3-4}
& & \textbf{LC (\%)} & \textbf{WR (\%)} \\
\midrule
\textsc{AMPO}-Bottomk    & Skywork-Reward-LM & 50.8 & 50.5 \\
\textsc{AMPO}-Coreset    & Skywork-Reward-LM & \textbf{52.4} & \textbf{52.1} \\
\textsc{AMPO}-Opt-Select & Skywork-Reward-LM & 51.6 & 51.2 \\
\midrule
\textsc{AMPO}-Bottomk    & GRM-Reward-LM     & 51.5 & 49.3 \\
\textsc{AMPO}-Coreset    & GRM-Reward-LM     & 52.5 & 49.7 \\
\textsc{AMPO}-Opt-Select & GRM-Reward-LM     & \textbf{52.9} & \textbf{51.7} \\
\bottomrule
\end{tabular}
}
\vspace{-0.1in}
\caption{Comparison of $\ampo$-baseline on AlpacaEval 2 using LLaMA-3-Instruct (8B) across different reward models\vspace{-0.1in}}
\label{tab:rm_choice table}
\end{table}

We observe that the relative ranking of methods remains largely consistent across reward models, with Opt-Select and Coreset outperforming Bottom-$k$ across metrics.

\vspace{0.05in}
\begin{takeawaybox}
\textbf{Key Takeaway:} $\ampo$ exhibits robust generalization across distinct reward models, indicating that its effectiveness is not tied to specific reward functions.
\end{takeawaybox}
\vspace{0.05in}

\vspace{-0.08in}
\paragraph{Effect of Negative Set Size ($K$) in $\ampo$}

To examine how the number of negative comparisons affects performance, we evaluate $\ampo$-Opt-Select with increasing values of $K$ in the 1-vs-$K$ selection strategy—specifically, $K \in {3, 5, 7}$. The results are presented in Table~\ref{tab:coreset-negatives} across AlpacaEval 2, Arena-Hard, and MT-Bench.

We observe that even with a small number of negatives (e.g., $K=3$), $\ampo$ maintains strong performance, indicating that we identify the high-utility contrastive examples. As $K$ increases, performance improves slightly, peaking at 1-vs-7, yet the marginal gains diminish.

\begin{figure}[!tbph]
    \centering
\includegraphics[width=1.0\columnwidth]{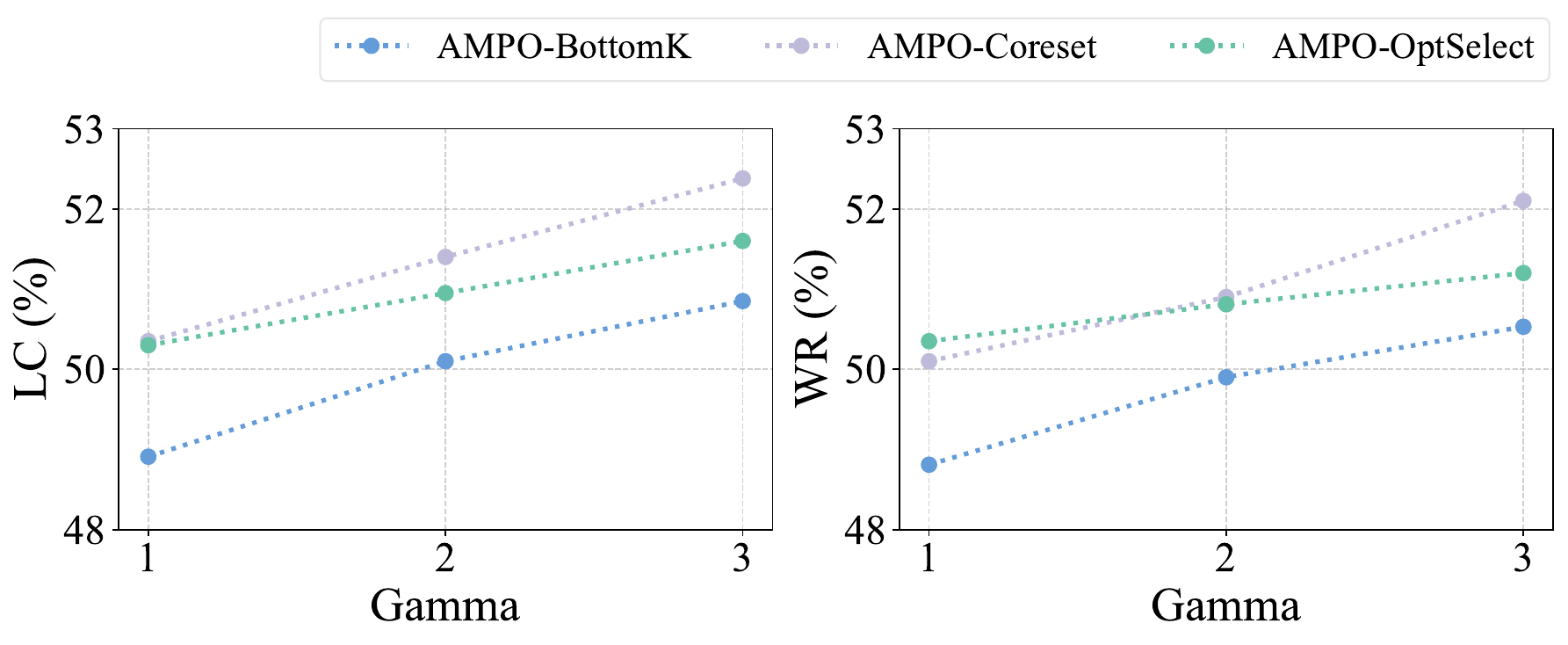}
    \vspace{-0.25in}
    \caption{Effect of Gamma on AlpacaEval2 for Active Subset Selection Strategies.\vspace{-0.15in}}
    \label{fig:samp-gamma-analysis}
\end{figure}

\vspace{0.05in}
\begin{takeawaybox}
\textbf{Key Takeaway:}
$\ampo$ is highly effective even with a small number of negative samples, and further increases in $K$ yield diminishing returns. This shows that our method may work in resource-constrained alignment settings where generating large negative sets is costly.
\end{takeawaybox}
\vspace{0.05in}
\begin{table}[!tbhp]
\centering
\resizebox{1.0\columnwidth}{!}{
\begin{tabular}{@{}lcccc@{}}
\toprule
\multirow{2}{*}{\textbf{Method}} & \multicolumn{2}{c}{\textbf{AlpacaEval 2}} & \textbf{Arena-Hard} & \textbf{MT-Bench} \\ 
\cmidrule(lr){2-3} \cmidrule(lr){4-4} \cmidrule(lr){5-5}
 & \textbf{LC (\%)} & \textbf{WR (\%)} & \textbf{WR (\%)} & \textbf{GPT-4} \\
\midrule
\ampo-Opt-Select (1vs3) & 49.6 & 48.5 & \underline{46.1} & 8.03 \\
\ampo-Opt-Select (1vs5) & 50.3 & 49.9 & 43.9 & 7.84 \\
\ampo-Opt-Select (1vs7) & \textbf{51.6} & \textbf{51.2} & \textbf{46.4} & \textbf{8.11} \\
\bottomrule
\end{tabular}
}
\vspace{-0.1in}
\caption{Effect of increasing the negative set size ($K$) in \ampo-Opt-Select on AlpacaEval2, Arena-Hard, and MT-Bench.}
\label{tab:coreset-negatives}
\end{table}

\begin{table}[!tbhp]
    \centering
    \resizebox{1.0\columnwidth}{!}{
    \begin{tabular}{@{}lcccc@{}}
    \toprule
    \multirow{2}{*}{\textbf{Method}} & \multicolumn{2}{c}{\textbf{AlpacaEval 2}} & \textbf{Arena-Hard} & \textbf{MT-Bench} \\ 
    \cmidrule(lr){2-3} \cmidrule(lr){4-4} \cmidrule(lr){5-5}
     & \textbf{LC (\%)} & \textbf{WR (\%)} & \textbf{WR (\%)} & \textbf{GPT-4} \\
    \midrule
    $\ampo$-Opt-Select (N = 16) & 50.6 & 50.1 & 45.5 & 7.76 \\
    $\ampo$-Opt-Select (N = 24) & 51.1 & 50.5 & 45.7 & 7.88 \\
    $\ampo$-Opt-Select (N = 32) & \textbf{51.6} & \textbf{51.2} & \textbf{46.4} & \textbf{8.11} \\
    \bottomrule
    \end{tabular}
    }
    \vspace{-0.15in}
    \caption{Effect of increasing number of responses ($N$) for selection using AMPO-Opt-Select (1 vs 7) setting on AlpacaEval2, Arena-Hard, and MT-Bench.\vspace{-0.25in}}
    \label{tab:coreset-tot-responses}
    \end{table}

\paragraph{Effect of Total Number of Responses ($N$) in $\ampo$-Opt-Select}

We investigate the impact of varying the total number of generated responses $N$ available for selection in $\ampo$-Opt-Select under a fixed 1-vs-7 contrastive setting. Specifically, we compare performance when $N \in {16, 24, 32}$, as shown in Table~\ref{tab:coreset-tot-responses}.

Our findings reveal that while increasing $N$ leads to consistent improvements across all evaluation benchmarks, the performance gains are marginal. Notably, even with $N=16$, the results remain competitive, suggesting that $\ampo$-Opt-Select effectively identifies high-quality contrastive sets with limited candidate pools. Nonetheless, a larger $N$ introduces greater response diversity, which can enhance the coverage of the preference space and lead to modest performance gains—culminating at $N=32$ with the highest scores across AlpacaEval 2, Arena-Hard, and MT-Bench.

\vspace{0.05in}
\begin{takeawaybox}
\textbf{Key Takeaway:} Increasing the pool size of generated responses for $\ampo$ improves performance, but the method remains strong even at lower $N$, demonstrating its efficiency and robustness in low-sample settings.
\end{takeawaybox}

\section{Discussion \& Future Work}
\label{sec:discussion_future_work}

\vspace{-0.05in}
\paragraph{Iteration via Active Synthetic Data Generation.}
The on-policy, coverage-focused active selection in AMPO naturally surfaces candidates for synthetic data generation. This opens avenues for future work in co-adapting the policy and reward model through this actively generated data via a robust policy-reward feedback loop.

\section*{Impact Statement}

This paper presents work whose goal is to advance the field of 
Machine Learning. There are many potential societal consequences 
of our work, none which we feel must be specifically highlighted here.

\bibliography{References}
\bibliographystyle{icml2025}

\newpage
\appendix
\onecolumn

\vspace{1cm}
\hrule
\par\vspace{0.5cm}
{\Large\bfseries\centering \textsc 
{Supplementary Materials}
\par\vspace{0.5cm}}
\hrule
\vspace{0.5cm}
\noindent These supplementary materials provide additional details, derivations, and experimental results for our paper. The appendix is organized as follows:
\begin{itemize}[leftmargin=1em]
    \item Section \ref{sec:related_work_extended} provides a more comprehensive overview of the related literature.

    \item Section \ref{sec:additional_experiments} provides additional experiments to supplement the experiments provided in the main part of the paper.
    
    \item Section \ref{sec:theory_opt_select_extended} provides theoretical analysis of the equivalence of the optimal selection integer program and the reward maximization objective.

    \item Section \ref{sec:local_search_kmedoids} shows a constant factor approximation for the coordinate descent algorithm in polynomial time.
    
    \item Section \ref{sec:constant_factor_subset_selection} provides theoretical guarantees for our k-means style coreset selection algorithm.
    
    \item Section \ref{sec:optimal_selection_computation} provides the code for computation of the optimal selection algorithm.

    \item Section \ref{sec:tsne_visualization} provides t-sne plots for the various queries highlighting the performance of our algorithms.

\end{itemize}
\vspace{0.5cm}

\section{Related Work}
\label{sec:related_work_extended}

We start this survey with a high-level overview of the broader Reinforcement Learning from Human Feedback (RLHF) literature, then deep dive into preference optimization and multi-preference optimization, and finally discuss active learning and subset selection techniques relevant to our work.

\paragraph{Preference Optimization in RLHF.}
Reinforcement Learning from Human Feedback (RLHF) has emerged as a robust alignment paradigm for language models. Early methods, such as Trust Region Policy Optimization (TRPO) \citep{schulman2015trust} and Proximal Policy Optimization (PPO) \citep{schulman2017proximal}, extend direct RL methods by constraining policy updates for stability. PPO, in particular, has been successfully applied to RLHF, allowing LLMs to produce outputs aligned with human preferences \citep{ziegler2019fine, ouyang2022training}. However, the complexity of training separate reward models and the potential instability of direct RL prompted simpler approaches.

Direct Preference Optimization (DPO) \citep{rafailov2024direct} simplifies LLM alignment by optimizing a contrastive loss directly over paired preference data, bypassing the intermediate reward modeling step. This makes DPO computationally efficient and suitable for limited preference datasets. A wide array of DPO extensions and alternative preference optimization methods have since emerged. These include variants like Identity Preference Optimization (IPO) \citep{azar2024general}, self-play preference optimization \citep{wu2024self}, preference ranking optimization \citep{song2024preference}, rejection sampling optimization \citep{liu2023statistical}, and generalized preference optimization \citep{tang2024generalized}. Many of these methods also address common DPO limitations, such as the need for a fixed reference model, which adds complexity. Works like RRHF \citep{yuan2023rrhf} and SLiC-HF \citep{Zhao2023SLiCHFSL} propose rank-based loss techniques. KTO \citep{ethayarajh2024kto} is a framework inspired by prospect theory that directly learns desirability, while RAFT \citep{dong2023raft} introduces a list-wise finetuning approach. Other notable methods include SPIN \citep{chen2024self}, which treats the model as part of an adversarial game, CPO \citep{xu2024contrastive}, which reworks the DPO objective, and ORPO \citep{hong2024orpo}, which unifies SFT and preference training. SimPO \citep{meng2024simpo} removes the reference model and incorporates length normalization to mitigate verbosity issues. Further variants like R-DPO \citep{park2024disentangling}, LD-DPO \citep{liu2024iterative}, sDPO \citep{kim2024sdpo}, IRPO \citep{pang2024iterative}, OFS-DPO \citep{qi2024online}, and LIFT-DPO \citep{yuan2024following} address specific challenges like length bias, reasoning chains, and training stability.

\paragraph{Multi-Preference Optimization.}
Traditional preference optimization methods primarily rely on pairwise comparisons. However, the advent of richer datasets, such as UltraFeedback \citep{cui2023ultrafeedback}, which provide multiple graded responses per query, highlights the necessity of \emph{multi-preference optimization}. These methods move beyond simple binary preferences by leveraging all available positive and negative responses simultaneously, leading to more nuanced feedback signals \citep{rafailov2024direct, cui2023ultrafeedback, chen2024noise}. Multi-preference objectives can reduce alignment bias and better approximate the true preference distribution by incorporating the diversity of acceptable and suboptimal responses. Examples include InfoNCA \citep{chen2024noise}, which utilizes a noise-contrastive objective based on scalar rewards. MPO \citep{gupta2024swepo}, builds upon this by introducing deviation-based weighting, giving stronger influence to responses that deviate significantly (positively or negatively) from the average quality. 
Refa \citep{gupta2024refa} fixes some of the common issues with MPO relating to multi-preference optimization including length bias as well as fixed reference. We build upon this framework to work on the problem of response selection in multi-preference optimization.
Here, we emphasize highly informative examples while mitigating the overemphasis on less informative negative samples, a common challenge in these contrastive methods.

\paragraph{On-Policy Self-Play.}
A key advancement in reinforcement learning that directly impacts LLM alignment is \emph{self-play} or on-policy data generation. In this paradigm, the model continuously updates its policy and re-generates data from its evolving distribution \citep{silver2016mastering, silver2017mastering}. This ensures that the training set remains aligned with the model’s current behavior \citep{christiano2017deep, wu2023fine, wu2024self}, accelerating convergence and maintaining data relevance. However, this dynamic generation process can significantly inflate the number of candidate responses per query, thereby motivating the need for selective down-sampling of training examples to manage computational load.

\paragraph{Active Learning for Policy Optimization.}
The notion of selectively querying the most informative examples is central to \emph{active learning} \citep{cohn1996active, settles2009active}, which aims to reduce labeling effort by focusing on high-utility samples. Several works incorporate active learning ideas into reinforcement learning, e.g., uncertainty sampling or diversity-based selection \citep{sener2017active, zhang2022active}. In the RLHF setting, \citet{christiano2017deep} highlight how strategic feedback can accelerate policy improvements, while others apply active subroutines to refine reward models \citep{wu2023fine}. By picking a small yet diverse set of responses, we avoid both computational blow-ups and redundant training signals.

\paragraph{Links with Classical Problems.}
Our work draws heavily from classic problems in machine learning and combinatorial optimization related to selecting representative subsets. \emph{Clustering} techniques such as $K$-means and $K$-medoids \citep{hartigan1979algorithm} are used to group points and ensure \emph{coverage} over semantically distinct modes in the embedding space \citep{har2004coresets, cohen2022improved}. These methods connect to the \emph{facility location} problem \citep{oh2017deep}, which seeks to minimize the cost of “covering” all points with a fixed number of centers, often addressed via coreset construction \citep{feldman2020core}. Furthermore, when selecting a subset of size $K$ to cover or suppress “bad” outputs, the objective can be framed as a \emph{min-knapsack} or combinatorial optimization problem \citep{kellerer2004introduction}. Such formulations often involve integer programs \citep{chen2020big}, for which approximate solutions can achieve strong empirical results in high-dimensional scenarios \citep{cohen2022improved, har2004coresets}. Our method frames the selection of negative samples in a Lipschitz coverage sense, thereby enabling both theoretical guarantees and practical efficiency in multi-preference alignment.

Collectively, our work stands at the intersection of \emph{multi-preference alignment} \citep{gupta2024refa, cui2023ultrafeedback}, \emph{on-policy data generation} \citep{silver2017mastering, ouyang2022training}, and \emph{active learning} \citep{cohn1996active, settles2009active}. We leverage ideas from \emph{clustering} (k-means, k-medoids) and \emph{combinatorial optimization} (facility location, min-knapsack) \citep{kellerer2004multidimensional, cacchiani2022knapsack} to construct small yet powerful training subsets that capture both reward extremes and semantic diversity. The result is an efficient pipeline for aligning LLMs via multi-preference signals without exhaustively processing all generated responses.

\clearpage
\section{Additional Experiments}
\label{sec:additional_experiments}

\begin{table*}[!tbph]
\centering
\resizebox{0.7\textwidth}{!}{
\begin{tabular}{@{}lccccc@{}}
\toprule
\multirow{2}{*}{\textbf{Method}} & \multirow{2}{*}{\textbf{Reward Model}} & \multicolumn{2}{c}{\textbf{AlpacaEval 2}} & \textbf{Arena-Hard} & \textbf{MT-Bench} \\ 
\cmidrule(lr){3-4} \cmidrule(lr){5-5} \cmidrule(lr){6-6}
 & & \textbf{LC (\%)} & \textbf{WR (\%)} & \textbf{WR (\%)} & \textbf{GPT-4} \\
\midrule
$\ampo$-Opt-Select-$\ell_0$ (1vsk) & Skywork-Reward-LM & 51.6 & 51.2 & \textbf{46.4} & \textbf{8.11} \\
$\ampo$-Opt-Select-$\ell_1$ (1vsk) & Skywork-Reward-LM & \textbf{52.16} & \textbf{51.58} & 45.4 & 8.07 \\
\bottomrule
\end{tabular}
}
\caption{Comparison of $\ampo$-Opt-Select variants with and without $\ell_1$-based selection on AlpacaEval, Arena-Hard, and MT-Bench. The $\ell_1$ variant improves LC and WR on AlpacaEval, while slightly underperforming on Arena-Hard and MT-Bench. Best results are in \textbf{bold}.}
\label{tab:optselect-l1}
\end{table*} 

\paragraph{\textbf{Reward Models as Classifiers vs. Regressors}}
To further analyze how reward scores influence alignment, we compare two approaches to forming preference pairs and computing loss in $\ampo$-Opt-Select: $\ell_0$ (classification-style) and $\ell_1$ (magnitude-aware).

For both settings, we generate a fixed number of responses per prompt. The response with the highest reward is placed in the positive set $\mathcal{Y}^+$, while a contrastive negative subset is selected via Opt-Select.

\begin{itemize}
    \item $\ell_0$ (Uniform Preference Weighting): Each preference pair contributes equally to the loss, regardless of the magnitude of reward difference. This reflects a pure classifier-style view of the reward model: only the relative ordering matters, not the exact values.
    \item $\ell_1$ (Reward Gap-Weighted Preference): Each preference pair is weighted by the absolute deviation of the rejected response’s reward from the mean reward value, i.e., $w_i = |\texttt{reward}_i - \overline{\texttt{reward}}|$. This encourages the model to prioritize learning from examples with larger reward separation, treating them as more informative for optimization.
\end{itemize}

Table~\ref{tab:optselect-l1} presents the results. While $\ell_1$ improves AlpacaEval metrics, $\ell_0$ performs better on Arena-Hard and MT-Bench, which are known to be noisier and more ambiguous in reward calibration. These findings reinforce the hypothesis that reward magnitude may not always reflect true quality, especially when misaligned with task-specific evaluation criteria.

\textbf{\textit{Key Finding:}}
While weighting by reward magnitude can improve alignment on clean datasets, uniform weighting with classifier-style preferences ($\ell_0$) offers better robustness across varied and noisy evaluation settings—supporting recent trends advocating for classification-based use of reward models.

\paragraph{\textbf{Why 1-vs-$k$ Preference Selection is Superior to $k$-vs-$k$}}

We ablate between 1-vs-$k$ and $k$-vs-$k$ preference construction strategies in $\ampo$, where the number of total responses is held fixed. In the 1-vs-$k$ setting, we select the single highest-scoring response as the positive and sample $k$ diverse negatives using $\ampo$ strategies. In contrast, the $k$-vs-$k$ setup selects multiple top-scoring responses and treats them equally as positives, paired against $k$ negatives.

\begin{table*}[!tbph]
\centering
\resizebox{0.7\textwidth}{!}{
\begin{tabular}{@{}lccccc@{}}
\toprule
\multirow{2}{*}{\textbf{Method}} & \multirow{2}{*}{\textbf{Reward Model}} & \multicolumn{2}{c}{\textbf{AlpacaEval 2}} & \textbf{Arena-Hard} & \textbf{MT-Bench} \\ 
\cmidrule(lr){3-4} \cmidrule(lr){5-5} \cmidrule(lr){6-6}
 & & \textbf{LC (\%)} & \textbf{WR (\%)} & \textbf{WR (\%)} & \textbf{GPT-4} \\ 
\midrule
AMPO-Bottomk (1vs7) & Skywork-Reward-LM & \textbf{50.8} & 50.5 & \textbf{45.2} & \textbf{8.11} \\
AMPO-Bottomk (4vs4) & Skywork-Reward-LM & 45.44 & \textbf{51.25} & 42.2 & 7.77 \\
\midrule
AMPO-Coreset (1vs7) & Skywork-Reward-LM & \textbf{52.4} & \textbf{52.1} & \textbf{47.8} & \textbf{8.12} \\
AMPO-Coreset (4vs4) & Skywork-Reward-LM & 46.61 & 51.4 & 46.3 & 7.67 \\
\midrule
AMPO-Opt-Select (1vs7) & Skywork-Reward-LM & \textbf{51.6} & 51.2 & \textbf{46.4} & \textbf{8.11} \\
AMPO-Opt-Select (4vs4) & Skywork-Reward-LM & 47.16 & \textbf{52.5} & 44.9 & 7.72 \\
\bottomrule
\end{tabular}
}
\caption{Dynamic AMPO-Based Top/Bottom Response Selection Across Evaluation Benchmarks for Llama-3-Instruct (8B)}
\label{tab:llama3-results-dynamic}
\end{table*}

\textbf{Theoretical Motivation}
If the goal is to maximize expected reward, the optimal strategy—when only the sampling probabilities over responses can be controlled—is to assign the highest probability to the response with the maximum reward score. This ensures reward-weighted sampling favors the best response. Including multiple responses in the positive set can dilute this probability mass and introduce ambiguity, especially when the difference between top-ranked responses is small or noisy.

This analysis is formalized in Section B.1 of the paper, where we show that concentrating probability mass on the single best response, while distributing mass away from contrastive negatives, is provably optimal in expectation under a fixed response budget.

\begin{table*}[!tbph]
\centering
\resizebox{0.7\textwidth}{!}{
\begin{tabular}{@{}lccccc@{}}
\toprule
\multirow{2}{*}{\textbf{Method}} & \multirow{2}{*}{\textbf{Reward Model}} & \multicolumn{2}{c}{\textbf{AlpacaEval 2}} & \textbf{Arena-Hard} & \textbf{MT-Bench} \\ 
\cmidrule(lr){3-4} \cmidrule(lr){5-5} \cmidrule(lr){6-6}
 & & \textbf{LC (\%)} & \textbf{WR (\%)} & \textbf{WR (\%)} & \textbf{GPT-4} \\ 
\midrule
AMPO-Opt-Select (1vs3) & Skywork-Reward-LM & 49.6 & 48.5 & \underline{46.1} & \underline{8.03} \\
AMPO-Opt-Select (2vs2) & Skywork-Reward-LM & 48.64 & 47.85 & 42.1 & 7.87 \\
\bottomrule
\end{tabular}
}
\caption{Dynamic AMPO-Based Top/Bottom Response Selection with Skywork-Reward-LM Across Evaluation Benchmarks for Llama-3-Instruct (8B).\vspace{-0.25pt} }
\label{tab:llama3-results-new}
\end{table*}

\textbf{Empirical Evidence}
We validate this hypothesis with empirical results presented in Table~\ref{tab:llama3-results-dynamic} and Table~\ref{tab:llama3-results-new}. Across Bottom-$k$, Coreset, and Opt-Select variants, the 1-vs-7 configuration consistently outperforms 4-vs-4, particularly in terms of LC win rate (AlpacaEval2), Arena-Hard, and MT-Bench. Similarly, 1-vs-3 outperforms 2-vs-2.

This suggests that:
Selecting a single clear positive introduces less ambiguity. Including multiple positives can inject noise if some “positive” responses are marginal or inconsistent. A broader negative set ($k$ larger) allows for better contrast and generalization.

\textbf{\textit{Key Finding:}}
The 1-vs-$k$ preference setup is theoretically optimal for maximizing expected reward and empirically leads to better performance. This supports our design choice of using a single, high-confidence positive response when constructing preference data for alignment.

\section{Extended Theoretical Analysis of \textsc{Opt-Select}}
\label{sec:theory_opt_select_extended}

In this appendix, we present a more detailed theoretical treatment of $\ampoos$. We restate the core problem setup and assumptions, then provide rigorous proofs of our main results. Our exposition here augments the concise version from the main text.

\subsection{Problem Setup}

Consider a single prompt (query) \(x\) for which we have sampled $N$ candidate responses \(\{\,y_1,\,y_2,\,\ldots,\,y_N\}\). Each response \(y_i\) has:
\begin{itemize}[itemsep=0.5em, leftmargin=1em]
    \item A scalar reward \(r_i \in [0,1]\).
    \item An embedding \(\mathbf{e}_i \in \mathbb{R}^d.\)
\end{itemize}
We define the distance between two responses \(y_i\) and \(y_j\) by
\begin{equation}
\label{eq:appdistdef}
A_{i,j} \;=\; \|\mathbf{e}_i \,-\, \mathbf{e}_j\|.
\end{equation}
Throughout we rescale the embedding so that $\max_{i,j}A_{i,j}=1$; the Lipschitz constant \(L\in(0,1]\) then compares quantities of the same scale.

We wish to learn a \emph{policy} \(\{p_i\}\), where \(p_i \ge 0\) and \(\sum_{i=1}^N p_i = 1\). The policy's \emph{expected reward} is
\begin{equation}
\label{eq:appexprew}
\mathrm{ER}(p) 
\;=\; 
\sum_{i=1}^N r_i \,p_i.
\end{equation}

\paragraph{Positive and Negative Responses.}
We designate exactly one response, denoted \(y_{i_{\mathrm{top}}}\), as a \emph{positive} (the highest-reward candidate). All other responses are potential ``negatives.'' Concretely:
\begin{itemize}[itemsep=0.5em, leftmargin=1em]
    \item We fix one index \(i_{\mathrm{top}}\) with \(\displaystyle i_{\mathrm{top}} \;=\; \arg \max_{i\in\{1,\dots,N\}}\,r_i.\)
    \item We choose a subset \(\mathcal{S}\subseteq \{1,\dots,N\}\setminus\{i_{\mathrm{top}}\}\) of size $K$, whose elements are forced to have \(p_j=0\). (These are the ``negatives.'')
\end{itemize}

\paragraph{Tie–breaking.}  
If several responses attain the maximal reward we keep the one with the smallest index; thus \(i_{\text{top}}\) is unique.

\subsubsection{Lipschitz Suppression Constraint}
\label{subsec:LipschitzConstraintApp}

We assume a mild Lipschitz-like rule:
\begin{enumerate}[label=(A\arabic*), itemsep=0.5em]
    \item\label{asmp:Lipschitz} \textbf{\(L\)-Lipschitz Constraint.} If \(p_j = 0\) for some \(j\in \mathcal{S}\), then for every response \(y_i\), we must have
    \begin{equation}
    \label{eq:LipschitzConstraintApp}
    p_i 
    \;\le\; 
    L\, A_{i,j}
    \;=\;
    L\,\|\mathbf{e}_i \,-\,\mathbf{e}_j\|.
    \end{equation}
\end{enumerate}
The effect is that whenever we force a particular negative \(j\) to have \(p_j=0\), any response \(i\) near \(j\) in embedding space also gets \emph{pushed down}, since \(p_i \le L\,A_{i,j}\). By selecting a set of $K$ negatives covering many ``bad'' or low-reward regions, we curb the policy's probability of generating undesirable responses.

\paragraph{Goal.} 
Define the feasible set of distributions:
\begin{equation}
\label{eq:feasibleRegionApp}
\mathcal{F}(\mathcal{S}) 
\;=\; 
\Bigl\{\,
\{p_i\}\colon p_j=0 \ \forall\,j\in \mathcal{S}, \ 
p_i \le L\, \min_{j\in \mathcal{S}} A_{i,j}\ \forall\,i\notin\{\,i_{\mathrm{top}}\}\cup\mathcal{S}
\Bigr\}.
\end{equation}

\paragraph{Feasibility condition.}  
For a given \(\mathcal S\) the constraint set \(\mathcal F(\mathcal S)\) is non-empty
iff  
\[
\sum_{i\notin\mathcal S\cup\{i_{\text{top}}\}} \min_{j\in\mathcal S}A_{i,j} \;\le\; 1/L.
\]
Hence we assume $K$ and \(L\) are chosen so that the above
inequality holds for at least one subset of size $K$.

We then have a two-level problem:
\begin{align}
\nonumber
&\max_{\,\substack{\mathcal{S}\,\subseteq \{1,\dots,N\}\setminus\{i_{\mathrm{top}}\}\\ |\mathcal{S}|=K}}  
\quad
\max_{\substack{\{p_i\}\in \mathcal{F}(\mathcal{S}) \\ \sum_i p_i = 1,\;p_i\ge 0}}
\quad
\sum_{i=1}^N r_i\, p_i,
\\[0.75em]
&\text{subject to}\quad p_{i_{\mathrm{top}}}\text{ is unconstrained (no Lipschitz bound).}
\label{eq:lip_mainObjApp}
\end{align}
We seek \(\mathcal{S}\) that \emph{maximizes} the best possible Lipschitz-compliant expected reward.

\subsection{Coverage View and the MIP Formulation}

\paragraph{Coverage Cost.}
To highlight the crucial role of ``covering'' low-reward responses, define a weight

\begin{equation}
\label{eq:appWeightDef}
w_i:=r_{\max}-r_i
\end{equation}
where $r_{\max}=\max_j r_j$.
Then a natural \emph{coverage} cost is
\begin{equation}
\label{eq:coverageCostApp}
\mathrm{Cost}(\mathcal{S})
\;=\;
\sum_{i=1}^N
  w_i
  \,\min_{j\in \mathcal{S}}
    A_{i,j}.
\end{equation}
A small \(\min_{j\in \mathcal{S}} A_{i,j}\) means response \(i\) is ``close'' to at least one negative center \(j\). If \(r_i\) is low, then \(w_i\) is large, so we put higher penalty on leaving \(i\) uncovered. Minimizing \(\mathrm{Cost}(\mathcal{S})\) ensures that \emph{important} (low-reward) responses are forced near penalized centers, thus \emph{suppressing} them in the policy distribution.

\paragraph{MIP \(\mathcal{P}\) for Coverage Minimization.}

We can write a mixed-integer program:

\begin{align}
\label{eq:covMIPApp}
\nonumber
\textbf{Problem } \mathcal{P}:\;
&\min_{\,\substack{x_j \in \{0,1\}\\ z_{i,j}\in \{0,1\}\\ y_i \ge 0}} 
\sum_{i=1}^N 
  w_i\,y_i,
\\
&\text{subject to}
\begin{cases}
\displaystyle
\sum_{j=1}^N x_j = K, 
\\[0.2em]
z_{i,j}\le x_j,\quad 
\sum_{j=1}^N z_{i,j} = 1,\quad \forall\,i,
\\[0.2em]
y_i\le A_{i,j} + M\,(1 - z_{i,j}),
\\[0.2em]
y_i\ge A_{i,j} - M\,(1 - z_{i,j}),\quad \forall\,i,j,
\end{cases}
\end{align}
where \(M = \max_{i,j} A_{i,j}\). Intuitively, each \(x_j\) indicates if \(j\) is chosen as a negative; each \(z_{i,j}\) indicates whether \(i\) is ``assigned'' to \(j\). At optimality, \(y_i = \min_{j\in \mathcal{S}} A_{i,j}\), so the objective 
\(\sum_i w_i\,y_i\) is precisely \(\mathrm{Cost}(\mathcal{S})\). Hence solving \(\mathcal{P}\) yields \(\mathcal{S}^*\) that \emph{minimizes} coverage cost~\eqref{eq:coverageCostApp}.


\begin{lemma}[Coverage cost controls negative reward]
\label{lem:coverage_negative}
Under (A1)--(A3), suppose \(S\) of size $K$ satisfies the feasibility condition, i.e.\ there exists \(\{p_i\}\in\mathcal F(S)\) with \(\sum_i p_i=1\).  Then for every normalized feasible \(\{p_i\}\) (i.e. \(\forall \{p_i\}\in\mathcal F(S)\)), we have:
\vspace{-0.05in}
\[
  \sum_{i=1}^N (r_{\max}-r_i)\,p_i
  \;\le\;
  L\,\sum_{i=1}^N
     (r_{\max}-r_i)
     \,\min_{j\in S}A_{i,j}
  \;=\;
  L\,\mathrm{Cost}(S),
\]
Consequently,
\vspace{-0.05in}
\[
  \max_{\{p_i\}\in\mathcal F(S)}
    \sum_i r_i\,p_i
  = r_{\max}
    - \min_{\{p_i\}\in\mathcal F(S)}
      \sum_i (r_{\max}-r_i)p_i
  \]
is \(\emph{maximised}\) exactly when \(\mathrm{Cost}(S)\) is \emph{minimised}. 

Furthermore, this bound is tight: one can set 
\[
p_i = L\,\min_{j\in S}A_{i,j}\quad(i\not=i_{\text{top}}),\qquad
p_{\,i_{\text{top}}}
=1 - L\sum_{i\neq i_{\text{top}}}\min_{j\in S}A_{i,j},
\]
which is feasible by assumption, and gives
\(\sum_i(r_{\max}-r_i)p_i=L\,\mathrm{Cost}(S)\),
so
\(\max\sum_i r_i p_i = r_{\max}-L\,\mathrm{Cost}(S)\).
\end{lemma}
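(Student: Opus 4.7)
\begin{proof_sketch}
The plan is to bound $\sum_i (r_{\max}-r_i)p_i$ term by term, using the fact that the weight $w_i=r_{\max}-r_i$ and the Lipschitz suppression rule cooperate nicely: the unconstrained index contributes a zero weight, while every constrained index is bounded by $L$ times an embedding distance.

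First I would split the sum according to the three disjoint classes of indices: (i) $i=i_{\text{top}}$, (ii) $i\in S$, and (iii) $i\notin S\cup\{i_{\text{top}}\}$. In class (i) the weight $r_{\max}-r_{i_{\text{top}}}$ vanishes by definition of $i_{\text{top}}$ (the tie-breaking rule making $i_{\text{top}}$ unique is used here). In class (ii) the variable $p_j=0$ because $j\in S$. So only class (iii) contributes, and on that class the Lipschitz constraint (A1) applied to the closest negative gives $p_i\le L\min_{j\in S}A_{i,j}$. Multiplying by the nonnegative weight $(r_{\max}-r_i)$ and summing yields
\[
  \sum_{i=1}^N (r_{\max}-r_i)\,p_i
  \;\le\;
  L\sum_{i\notin S\cup\{i_{\text{top}}\}}(r_{\max}-r_i)\min_{j\in S}A_{i,j}.
\]
I would then re-extend the right-hand sum to all $i$: for $i=i_{\text{top}}$ the weight is $0$, and for $i\in S$ we have $\min_{j\in S}A_{i,j}=0$ (taking $j=i$), so no mass is added. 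This recovers $L\,\mathrm{Cost}(S)$, proving the inequality.

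The consequence on expected reward is a direct algebraic identity: $\sum_i r_i p_i = r_{\max}\sum_i p_i - \sum_i(r_{\max}-r_i)p_i = r_{\max} - \sum_i(r_{\max}-r_i)p_i$ since $\{p_i\}$ is a probability distribution. Hence maximizing the reward over $\mathcal F(S)$ is literally the same problem as minimizing the weighted cost, and across choices of $S$ the optimum is attained where $\mathrm{Cost}(S)$ is smallest.

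For tightness, I would exhibit the explicit distribution in the statement: $p_i=L\min_{j\in S}A_{i,j}$ for $i\neq i_{\text{top}}$ and all leftover mass on $i_{\text{top}}$. Feasibility decomposes into three checks: (a) $p_j=0$ for $j\in S$, immediate since $A_{j,j}=0$; (b) the Lipschitz inequalities $p_i\le L\min_{j\in S}A_{i,j}$ hold with equality by construction; (c) $p_{i_{\text{top}}}\ge 0$, which is exactly the stated feasibility condition $\sum_{i\neq i_{\text{top}}}\min_{j\in S}A_{i,j}\le 1/L$. Substituting these $p_i$ into $\sum_i(r_{\max}-r_i)p_i$ makes the inequality from the previous paragraph an equality, because we used equality in the Lipschitz step and the $i_{\text{top}}$ contribution is again zero. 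The main obstacle is simply the bookkeeping around the three index classes and making sure the re-extension step is justified before claiming equality with $L\,\mathrm{Cost}(S)$; beyond that, no sharper tool than (A1) and the definition of $w_i$ is required.
\end{proof_sketch}
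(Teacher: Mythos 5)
Your proof is correct and follows essentially the same route as the paper's: apply (A1) termwise to the weighted sum, invoke the identity $\sum_i r_i p_i = r_{\max}\sum_i p_i - \sum_i (r_{\max}-r_i)p_i = r_{\max} - \sum_i(r_{\max}-r_i)p_i$, and verify tightness with the explicit saturating distribution. Your three-way index split and the justification of the re-extension step (the weight vanishes at $i_{\text{top}}$ and $\min_{j\in S}A_{i,j}=0$ for $i\in S$) make explicit some bookkeeping the paper's terser proof leaves implicit, but the underlying argument is identical.
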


\begin{proof}
By (A1), any \(i\notin S\cup\{i_{\text{top}}\}\) satisfies 
\(p_i\le L\,\min_{j\in S}A_{i,j}\),
hence
\((r_{\max}-r_i)p_i\le L\,(r_{\max}-r_i)\min_{j\in S}A_{i,j}\).
Summing over \(i\) yields the claimed bound, and the equivalence
between minimising \(\mathrm{Cost}(S)\) and maximising \(\sum_i r_i p_i\)
follows by writing
\[
   \sum_i r_i p_i
   = r_{\max}\underbrace{\sum_i p_i}_{=1}
     \;-\;
     \sum_i (r_{\max}-r_i)p_i
   = r_{\max}-\sum_i(r_{\max}-r_i)p_i,
\]
and observing the inequality becomes an equality for the choice above.
\end{proof}

\subsection{Main Theorem: Optimality of \(\mathcal{P}\) for Lipschitz Alignment}

\begin{theorem}[Optimal Negative Set via \(\mathcal{P}\)]
\label{thm:appOptNegatives}
Let \(\mathcal{S}^*\) be the solution to the MIP \(\mathcal{P}\) in \eqref{eq:covMIPApp}, i.e.\ it \emph{minimizes} \(\mathrm{Cost}(\mathcal{S})\). Then \(\mathcal{S}^*\) also \emph{maximizes} the objective \eqref{eq:lip_mainObjApp}. Consequently, picking \(\mathcal{S}^*\) and allowing any distribution on \(i_{\mathrm{top}} \approx \arg\max_i\, r_i\) yields the \emph{optimal} Lipschitz-compliant policy.
\end{theorem}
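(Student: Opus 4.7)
The plan is to decompose the bilevel maximization in \eqref{eq:lip_mainObjApp} into (i) an inner maximization over feasible policies $\{p_i\}$ for a fixed subset $\mathcal{S}$, which collapses to a closed form via Lemma~\ref{lem:coverage_negative}, and (ii) an outer minimization over subsets $\mathcal{S}$ of size $K$, which reduces to minimizing $\mathrm{Cost}(\mathcal{S})$. The MIP $\mathcal{P}$ is then shown to encode exactly this outer subproblem, so its optimum $\mathcal{S}^*$ is optimal for the original objective. Lemma~\ref{lem:coverage_negative} has already done the heavy lifting, since it establishes the tight identity $\max_{\{p_i\}\in\mathcal F(\mathcal{S})}\sum_i r_i p_i = r_{\max}-L\,\mathrm{Cost}(\mathcal{S})$ whenever the feasibility condition on $\mathcal{S}$ holds.

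First, I would fix an arbitrary $\mathcal{S}$ of size $K$ satisfying the feasibility condition and invoke Lemma~\ref{lem:coverage_negative} to replace the inner maximization by $r_{\max} - L\,\mathrm{Cost}(\mathcal{S})$. Since $r_{\max}$ and $L$ do not depend on $\mathcal{S}$, the outer problem \eqref{eq:lip_mainObjApp} becomes
\begin{equation*}
\max_{|\mathcal{S}|=K}\bigl[r_{\max} - L\,\mathrm{Cost}(\mathcal{S})\bigr]
\;=\;
r_{\max} \;-\; L\,\min_{|\mathcal{S}|=K}\mathrm{Cost}(\mathcal{S}).
\end{equation*}
Thus an $\mathcal{S}$ that minimizes $\mathrm{Cost}(\mathcal{S})$ is automatically an $\mathcal{S}$ that maximizes the original objective, and the witness distribution attaining the inner maximum is the explicit one given in the lemma (saturate the Lipschitz bound on non-top responses, put the residual mass on $y_{i_{\mathrm{top}}}$).

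Next, I would verify that the MIP $\mathcal{P}$ in \eqref{eq:covMIPApp} computes $\min_{|\mathcal{S}|=K}\mathrm{Cost}(\mathcal{S})$. For any $\{x_j\}$ feasible in $\mathcal{P}$, let $\mathcal{S}=\{j:x_j=1\}$; the constraints $z_{i,j}\le x_j$ and $\sum_j z_{i,j}=1$ force each $i$ to be assigned to exactly one $j\in\mathcal{S}$, while the big-$M$ pair of inequalities forces $y_i = A_{i,j}$ for that assigned $j$. Because $w_i\ge 0$, minimizing $\sum_i w_i y_i$ drives each $i$ to be assigned to its nearest chosen negative, giving $y_i=\min_{j\in\mathcal{S}}A_{i,j}$ at optimality; the objective then equals $\mathrm{Cost}(\mathcal{S})$. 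Conversely, any $\mathcal{S}$ of size $K$ can be realized in $\mathcal{P}$ by the nearest-assignment rule. Hence the MIP optimum is the set $\mathcal{S}^*$ that minimizes $\mathrm{Cost}$, which by the preceding paragraph is optimal for \eqref{eq:lip_mainObjApp}.

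The main obstacle I anticipate is handling the interaction between optimality in $\mathcal{P}$ and the feasibility condition required by Lemma~\ref{lem:coverage_negative}: the lemma only gives the closed-form reward $r_{\max}-L\,\mathrm{Cost}(\mathcal{S})$ on subsets for which $\mathcal{F}(\mathcal{S})$ admits a normalized distribution (i.e.\ $L\sum_{i\not\in\mathcal{S}\cup\{i_{\mathrm{top}}\}}\min_{j\in\mathcal{S}}A_{i,j}\le 1$). I would therefore restrict the feasible $\mathcal{S}$'s in the MIP to this admissible class (either as an implicit assumption carried over from the theorem's hypothesis, or by adding the single linear constraint $L\sum_i y_i \le 1 + L\,y_{i_{\mathrm{top}}}$ to $\mathcal{P}$), and note that minimizing coverage cost only makes the feasibility slack larger, so the argmin lies in the admissible class whenever it is nonempty. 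A minor secondary point is tie-breaking in the nearest-chosen assignment; this is harmless since any tie-breaking rule yields the same objective value $\mathrm{Cost}(\mathcal{S})$.
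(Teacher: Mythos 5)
Your proof follows essentially the same route as the paper's: collapse the inner maximization via Lemma~\ref{lem:coverage_negative} to the closed form $r_{\max}-L\,\mathrm{Cost}(\mathcal{S})$, so that the bilevel problem reduces to $\min_{|\mathcal{S}|=K}\mathrm{Cost}(\mathcal{S})$, which the MIP $\mathcal{P}$ solves. Your explicit verification that $\mathcal{P}$ encodes the coverage cost and your handling of the feasibility condition are refinements of (not departures from) the paper's two-line argument, which takes both points for granted.
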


\begin{proof}
By construction, solving \(\mathcal{P}\) returns \(\mathcal{S}^*\) with
\(\displaystyle
\mathrm{Cost}(\mathcal{S}^*)
\;=\;
\min_{|\mathcal{S}|=K}\,\mathrm{Cost}(\mathcal{S}).
\)
By Lemma~\ref{lem:coverage_negative}, minimising \(\mathrm{Cost}(S)\) indeed maximises the feasible expected reward, so such an \(\mathcal{S}^*\) simultaneously \emph{maximizes} the best possible feasible expected reward. Hence \(\mathcal{S}^*\) is precisely the negative set that achieves the maximum of \eqref{eq:lip_mainObjApp}.

\end{proof}

\paragraph{Interpretation.} 
Under a mild Lipschitz assumption in embedding space, penalizing (assigning zero probability to) a small set \(\mathcal{S}\) \emph{and} forcing all items near \(\mathcal{S}\) to have small probability is equivalent to a \emph{coverage} problem. Solving (or approximating) \(\mathcal{P}\) selects negatives that push down low-reward modes as effectively as possible.

\vspace{-0.05in}
\subsection{Discussion and Practical Implementation}

\vspace{-0.05in}
\textsc{Opt-Select} thus emerges from optimizing coverage: 
\begin{enumerate}[leftmargin=1.25em, itemsep=0.5em]
    \item \textbf{Solve or approximate} the MIP \(\mathcal{P}\) to find the best subset \(\mathcal{S}\subseteq\{1,\dots,N\}\setminus\{i_{\mathrm{top}}\}\).
    \item \textbf{Force} \(p_j=0\) for each \(j\in \mathcal{S}\); \textbf{retain} \(i_{\mathrm{top}}\) with full probability (\(p_{i_{\mathrm{top}}}\approx 1\)), subject to normalizing the distribution. 
\end{enumerate}
In practice, local search or approximate clustering-based approaches (e.g.\ Weighted $K$-Medoids) can find good solutions without exhaustively solving \(\mathcal{P}\). The method ensures that near any chosen negative \(j\), all semantically similar responses \(i\) have bounded probability \(p_i \le L\,A_{i,j}\). Consequently, \textsc{Opt-Select} \emph{simultaneously} covers and suppresses undesired modes while preserving at least one high-reward response unpenalized.

\paragraph{Additional Remarks.}

\begin{itemize}[leftmargin=1.25em, itemsep=0.5em]
    \item The single-positive assumption reflects a practical design where one high-reward response is explicitly promoted. This can be extended to multiple positives, e.g.\ top \(K^+\) responses each unconstrained.
    \item For large $N$, the exact MIP solution may be expensive; local search (see Appendix~\ref{sec:local_search_kmedoids}) still achieves a constant-factor approximation.
    \item The embedding-based Lipschitz constant \(L\) is rarely known exactly; however, the coverage perspective remains valid for “sufficiently smooth” reward behaviors in the embedding space.
\end{itemize}

Overall, these results solidify \textsc{Opt-Select} as a principled framework for negative selection under Lipschitz-based alignment objectives.

\clearpage
\section{Local Search Guarantees for Weighted \texorpdfstring{$K$}{K}-Medoids and Lipschitz-Reward Approximation}
\label{sec:local_search_kmedoids}

In this appendix, we show in \Cref{thm:local_search_kmedoids} that a standard \emph{local search} algorithm for \emph{Weighted $K$-Medoids} achieves a constant-factor approximation in polynomial time.

\subsection{Weighted \texorpdfstring{$K
$}{K}-Medoids Setup}

We are given:
\begin{itemize}
\item A set of $N$ points, each indexed by $i\in\{1,\dots,N\}$.
\item A distance function $d(i,j)\ge0$, which forms a metric: $d(i,j)\le d(i,k)+d(k,j)$, $d(i,i)=0$, $d(i,j)=d(j,i)$.
\item A nonnegative \emph{weight} $w_i$ for each point $i$.
\item A budget $K$, $1\le K\le N$.
\end{itemize}
We wish to pick a subset $\mathcal{S}\subseteq\{1,\dots,N\}$ of \emph{medoids} (centers) with size $|\mathcal{S}|=K$ that minimizes the objective
\begin{align}
\label{eq:wkmedoids_objective}
\mathrm{Cost}(\mathcal{S})
\;=\;
\sum_{i=1}^N
  w_i
  \cdot
  \min_{\,j\in \mathcal{S}}\,
    d(i,j).
\end{align}
We call this the \textbf{Weighted $K$-Medoids} problem.  Note that \textbf{medoids} must come from among the data points, as opposed to $K$-median or $K$-means where centers can be arbitrary points in the metric or vector space. Our Algorithm \ref{alg:opt_select} reduces to exactly this problem.

\subsection{Coordinate Descent Algorithm via Local Search}

Our approach to the NP-hardness of Algorithm \ref{alg:opt_select} was to recast it as a simpler coordinate descent algorithm in Algorithm \ref{alg:opt_select_local_search}, wherein we do a local search at every point towards achieving the optimal solution.
Let $\textsc{Cost}(\mathcal{S})$ be as in \eqref{eq:wkmedoids_objective}.

\begin{enumerate}
\item \textbf{Initialize:} pick any subset $\mathcal{S}\subseteq\{1,\dots,N\}$ of size $K$ (e.g.\ random or greedy).
\item \textbf{Repeat}: Try all possible single \emph{swaps} of the form
\[
   \mathcal{S}' 
   \;=\; 
   \bigl(\,\mathcal{S}\setminus\{\,j\}\bigr)
   \,\cup\,
   \{\,j'\},
\]
where $j\in\mathcal{S}$ and $j'\notin\mathcal{S}$.  
\item \textbf{If any swap improves cost}: i.e.\ $\mathrm{Cost}(\mathcal{S}') < \mathrm{Cost}(\mathcal{S})$, then set $\mathcal{S}\leftarrow \mathcal{S}'$ and continue.
\item \textbf{Else terminate}: no single swap can further reduce cost.
\end{enumerate}

When the algorithm stops, we say $\mathcal{S}$ is a \emph{local optimum under 1-swaps}.

\subsection{Constant-Factor Approximation in Polynomial Time}

We now present and prove a result: such local search yields a constant-factor approximation.  Below, we prove a version with a \emph{factor 5} guarantee for Weighted $K$-Medoids.  Tighter analyses can improve constants, but 5 is a commonly cited bound for this simple variant.

\begin{theorem}[Local Search for Weighted $K$-Medoids]
\label{thm:local_search_kmedoids}
Let $\mathcal{S}^*$ be an \textbf{optimal} subset of medoids of size $K$. Let $\widehat{\mathcal{S}}$ be any \textbf{local optimum} obtained by the above 1-swap local search. Then
\begin{equation}
    \mathrm{Cost}\bigl(\widehat{\mathcal{S}}\bigr)
  \;\;\le\;\;
  5
  \,\times\,
  \mathrm{Cost}\bigl(\mathcal{S}^*\bigr).
\end{equation}

Moreover, the procedure runs in polynomial time (at most $\bigl(\binom{N}{K}\bigr)$ “worse-case” swaps in principle, but in practice each improving swap decreases cost by a non-negligible amount, thus bounding the iteration count).
\end{theorem}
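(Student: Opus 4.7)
The plan is to mirror the classical single-swap local-search analysis of Arya, Garg, Khandekar, Meyerson, Munagala, and Pandit for metric $K$-median, and to check that the weights $w_i$ pass through all of the triangle-inequality estimates unchanged. Fix a local optimum $\widehat{\mathcal{S}}$ and a global optimum $\mathcal{S}^*$; for each point $i$ let $\phi(i)\in\widehat{\mathcal{S}}$ and $\phi^*(i)\in\mathcal{S}^*$ be its nearest local and optimal medoids, and set $\widehat{C}_i = w_i\,d(i,\phi(i))$ and $O_i^* = w_i\,d(i,\phi^*(i))$, so that $\widehat{O} := \mathrm{Cost}(\widehat{\mathcal{S}}) = \sum_i\widehat{C}_i$ and $O^* := \mathrm{Cost}(\mathcal{S}^*) = \sum_i O_i^*$. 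Define $\eta:\mathcal{S}^*\to\widehat{\mathcal{S}}$ by $\eta(o) = \arg\min_{j\in\widehat{\mathcal{S}}} d(j,o)$, and call $j\in\widehat{\mathcal{S}}$ \emph{captured} if $\eta^{-1}(j)\ne\emptyset$. Following the standard Arya et al.\ construction, I build a family of test pairs $(j,o)\in\widehat{\mathcal{S}}\times\mathcal{S}^*$ in which each $o$ appears in exactly one pair, each $j$ appears in at most two, and whenever $(j,o)$ is a pair we have $j\ne\eta(\phi^*(i))$ for every $i$ with $\phi(i)=j$ and $\phi^*(i)\ne o$.

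For each pair $(j,o)$, local optimality of $\widehat{\mathcal{S}}$ gives $\mathrm{Cost}(\widehat{\mathcal{S}}-j+o)-\widehat{O}\ge 0$, and I upper-bound the left-hand side by an explicit reassignment: points with $\phi^*(i)=o$ move to $o$ (contributing $O_i^*-\widehat{C}_i$), points with $\phi(i)=j$ and $\phi^*(i)\ne o$ move to $\eta(\phi^*(i))$, which is still present by the structural property built into the pairing, and all other points remain with $\phi(i)$. The triangle inequality combined with the defining property of $\eta$ gives
\begin{align*}
d\bigl(i,\eta(\phi^*(i))\bigr)
 &\le d(i,\phi^*(i))+d\bigl(\phi^*(i),\eta(\phi^*(i))\bigr)\\
 &\le d(i,\phi^*(i))+d(\phi^*(i),\phi(i))\\
 &\le 2\,d(i,\phi^*(i))+d(i,\phi(i)),
\end{align*}
so the contribution of each rerouted point is at most $2\,O_i^*$. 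Summing the swap inequalities over the entire family, and exploiting that each $o$ is counted exactly once (yielding a telescope into $O^*-\widehat{O}$) while each $j$ is counted at most twice (inflating the rerouting sum by a factor of two), produces $0\le(O^*-\widehat{O})+4\,O^*$, i.e.\ $\widehat{O}\le 5\,O^*$.

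For the running time I adopt the usual trick of accepting only swaps that reduce the current cost by a factor of at least $(1-\epsilon/K)$; this forces geometric decrease and yields an iteration count polynomial in $N$, $K$, and $1/\epsilon$, while each iteration scans the $K(N-K)$ candidate swaps in polynomial time. The resulting bound is $5+\epsilon$, and an $\epsilon\to 0$ argument (standard, via bounded denominators of the rational cost values, or by absorbing $\epsilon$ into the stated constant) recovers the clean ratio $5$. The main obstacle is the pairing construction: it must simultaneously (i) use each local medoid only $O(1)$ times and (ii) guarantee that in every test swap $(j,o)$ the rerouting target $\eta(\phi^*(i))$ is not itself the center being removed. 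This is where the captured/free combinatorics of $\widehat{\mathcal{S}}$ enters and where the constant $5$ actually crystallises (tighter analyses via multi-swaps improve it to $3+\epsilon$). The weighted generalisation is essentially free, since every inequality above is derived pointwise in the metric $d$ and only then multiplied by $w_i\ge 0$.
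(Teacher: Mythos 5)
Your proposal is correct, and it reproduces the canonical Arya--Garg--Khandekar--Meyerson--Munagala--Pandit analysis, but it is structured genuinely differently from the paper's proof. The paper argues by contradiction: assuming $\mathrm{Cost}(\widehat{\mathcal{S}}) > 5\,\mathrm{Cost}(\mathcal{S}^*)$, it isolates a single optimal center whose Voronoi cell contributes more than $4\,c(\mathcal{S}^*)/K$ to the excess cost $D=\sum_i w_i\,(r_i-r_i^*)_+$, swaps it in for a local center serving at most a $1/K$ fraction of the total weight, and exhibits a strictly improving swap; the crucial bound $\Delta_{\mathrm{out}} \le c(\mathcal{S}^*)/K$ is only asserted in a remark and deferred to the original literature. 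You instead run the direct version: build the full family of test pairs $(j,o)$ in which each optimal center appears exactly once and each local center at most twice, sum the non-negativity of the cost change over all test swaps, and let the chain $d\bigl(i,\eta(\phi^*(i))\bigr) \le 2\,d(i,\phi^*(i)) + d(i,\phi(i))$ telescope into $0 \le (O^*-\widehat{O}) + 4\,O^*$. Your route makes the origin of the constant $5$ (one copy of $O^*$ from reassignment plus $2\times 2$ from rerouting) explicit, whereas the paper's single-swap contradiction is shorter but buries the analogous difficulty in the unproved $\Delta_{\mathrm{out}}$ bound; both arguments ultimately lean on Arya et al.\ for the combinatorial pairing/capture step, which you correctly identify as the main obstacle without constructing it in full, so neither is more self-contained than the other on that point. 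Your observation that the weights $w_i\ge 0$ pass through unchanged because every estimate is pointwise in the metric is exactly right, and your $(1-\epsilon/K)$-improvement device for polynomial running time is a standard alternative to the paper's minimum-cost-gap argument; note only that the clean factor $5$ for an exact local optimum already follows from your main summation without any $\epsilon\to 0$ limit, the $\epsilon$ being needed solely for the iteration bound.
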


\begin{remark}
    We follow the result from \citet{arya2001local} who define the \textit{locality gap} of the single‐swap local‐search procedure as the worst‐case ratio between the cost of any local optimum and the global optimum. They prove that for the metric K-median problem, this gap is exactly 5. More precisely, permitting only one swap per step guarantees 
    \begin{equation}
    \mathrm{Cost}(\widehat S) \;\le\; 5 \,\mathrm{Cost}(S^*)    
    \end{equation}
    for every local optimum $\widehat S$ and global optimum $S^*$
\end{remark}

\paragraph{Sketch of \citet{arya2001local}'s Analysis.} They partition the data according to the Voronoi cells of the global optimum, then show via a “coupling” argument (together with repeated triangle‐inequality bounds) that whenever a local swap cannot improve the solution, the total service cost from each cell is bounded by five times its optimal cost.

\begin{proof}
\textbf{Notation.}
\begin{itemize}
\item Let $\widehat{\mathcal{S}}$ be the final local optimum of size $K$. 
\item Let $\mathcal{S}^*$ be an optimal set of size $K$. 
\item For each point $i$, define
\[
  r_i 
  \;=\; 
  d\!\bigl(i,\widehat{\mathcal{S}}\bigr)
  \;=\;
  \min_{j \in \widehat{\mathcal{S}}} d(i,j),
  \quad
  r_i^*
  \;=\;
  \min_{j\in \mathcal{S}^*} d(i,j).
\]
Thus $\mathrm{Cost}(\widehat{\mathcal{S}}) = \sum_i w_i\,r_i$ and $\mathrm{Cost}(\mathcal{S}^*) = \sum_i w_i\,r_i^*$.

\item Let $c(\mathcal{S}) = \sum_i w_i\,d(i,\mathcal{S})$ as shorthand for $\mathrm{Cost}(\mathcal{S})$. 
\end{itemize}

\noindent
\textbf{Step 1: Construct a ``Combined'' Set.}  
Consider 
\[
  \mathcal{S}^\dagger 
  \;=\;
  \widehat{\mathcal{S}}
  \;\cup\;
  \mathcal{S}^*.
\]
We have $|\mathcal{S}^\dagger|\le 2K$.  Let $c(\mathcal{S}^\dagger) = \sum_i w_i\,d\bigl(i,\mathcal{S}^\dagger\bigr)$.  

Observe that
\[
  d\!\bigl(i,\mathcal{S}^\dagger\bigr)
  \;=\;
  \min\!\bigl\{
    d\!\bigl(i,\widehat{\mathcal{S}}\bigr),\,
    d\!\bigl(i,\mathcal{S}^*\bigr)
  \bigr\}
  \;=\;
  \min\{\,r_i,\;r_i^*\}.
\]
Hence
\[
  c(\mathcal{S}^\dagger)
  \;=\;
  \sum_{i=1}^N 
    w_i\,
    \min\{\,r_i,\ r_i^*\}.
\]
We will relate $c(\mathcal{S}^\dagger)$ to $c(\widehat{\mathcal{S}})$ and $c(\mathcal{S}^*)$.

\medskip
\noindent
\textbf{Step 2: Partition Points According to $\mathcal{S}^*$.}  
For each $j^*\in \mathcal{S}^*$, define the cluster 
\[
  C(j^*)
  \;=\;
  \bigl\{
    i \mid j^* 
    = 
    \arg\min_{j'\in \mathcal{S}^*} d(i,j')
  \bigr\}.
\]
Hence $\{\,C(j^*)\,:\,j^*\in \mathcal{S}^*\}$ is a partition of $\{1,\dots,N\}$.  We now group the cost contributions by these clusters.

\medskip
\noindent
\textbf{Goal: Existence of a Good Swap.}
We will \emph{assume} $c(\widehat{\mathcal{S}})>5\,c(\mathcal{S}^*)$ and derive a contradiction by producing a \emph{profitable swap} that local search should have found.  

Specifically, we show that there must be a center $j^*\in \mathcal{S}^*$ whose cluster $C(j^*)$ is “costly enough” under $\widehat{\mathcal{S}}$, so that swapping out some center $j\in\widehat{\mathcal{S}}$ for $j^*$ significantly reduces cost.  But since $\widehat{\mathcal{S}}$ was a local optimum, no such profitable swap could exist.  This contradiction implies $c(\widehat{\mathcal{S}})\le 5\,c(\mathcal{S}^*)$.

\medskip
\noindent
\textbf{Step 3: Detailed Bounding.}

We have
\[
  c(\mathcal{S}^\dagger)
  =
  \sum_{i=1}^N
    w_i\,\min\{r_i,\,r_i^*\}
  \;\le\;
  \sum_{i=1}^N
    w_i\,r_i^*
  =
  c(\mathcal{S}^*).
\]
Similarly, 
\[
  c(\mathcal{S}^\dagger)
  \;\le\;
  \sum_{i=1}^N
    w_i\,r_i
  =
  c\!\bigl(\widehat{\mathcal{S}}\bigr).
\]
Hence $c(\mathcal{S}^\dagger)\le\min\bigl\{c(\widehat{\mathcal{S}}),\,c(\mathcal{S}^*)\bigr\}$.  
Now define
\[
   D
   \;=\;
   \sum_{i=1}^N
     w_i
     \,\bigl[
       r_i
       -
       \min\{\,r_i,\,r_i^*\}
     \bigr]
   \;=\;
   \sum_{i=1}^N
     w_i\,\bigl(r_i - r_i^*\bigr)_{+},
\]
where $(x)_{+}=\max\{x,0\}$.  By rearranging,
\[
  \sum_{i=1}^N w_i\,r_i
  \;-\;
  \sum_{i=1}^N w_i\,\min\{\,r_i,\,r_i^*\}
  \;=\;
  D.
\]
Thus
\[
  c(\widehat{\mathcal{S}}) - c(\mathcal{S}^\dagger)
  \;=\;
  D
  \;\;\ge\;\;
  c(\widehat{\mathcal{S}}) - c(\mathcal{S}^*).
\]
So
\[
  D
  \;\ge\;
  c\!\bigl(\widehat{\mathcal{S}}\bigr)
  \;-\;
  c\!\bigl(\mathcal{S}^*\bigr).
\]
Under the assumption $c(\widehat{\mathcal{S}})>5\,c(\mathcal{S}^*)$, we get 
\[
  D
  \;>\;
  4\,c(\mathcal{S}^*).
  \tag{*}
\]

\medskip
\noindent
\textbf{Step 4: Find a Center $j^*$ with Large $D$ Contribution.}
We now “distribute” $D$ over clusters $C(j^*)$.  Let
\[
  D_{j^*}
  =
  \sum_{i \in C(j^*)}
    w_i\,\bigl(r_i - r_i^*\bigr)_{+}.
\]
Then 
\(\displaystyle
D=\sum_{j^*\in \mathcal{S}^*} D_{j^*}.
\)
Since $D>4\,c(\mathcal{S}^*)$, at least one $j^*\in \mathcal{S}^*$ satisfies
\begin{equation}
\label{eqn:4_s_star_bound}
  D_{j^*}
  \;>\;
  4\,
  \frac{c(\mathcal{S}^*)}{|\mathcal{S}^*|}
  \;=\;
  4\,\frac{c(\mathcal{S}^*)}{K},
\end{equation}
because $|\mathcal{S}^*|=K$.  Denote this center as $j^*_{\text{large}}$ and its cluster $C^* = C(j^*_{\text{large}})$.

\medskip
\noindent
\textbf{Step 5: Swapping $j^*$ into $\widehat{\mathcal{S}}$.}
Consider the swap
\[
  \widehat{\mathcal{S}}_{\mathrm{swap}}
  \;=\;
  \Bigl(
    \widehat{\mathcal{S}}\setminus\bigl\{\,j_{\mathrm{out}}\bigr\}
  \Bigr)
  \,\cup\,
  \bigl\{\,j^*_{\text{large}}\bigr\}
\]
where $j_{\mathrm{out}}$ is whichever center in $\widehat{\mathcal{S}}$ we choose to remove.  We must show that for an appropriate choice of $j_{\mathrm{out}}$, the cost $c(\widehat{\mathcal{S}}_{\mathrm{swap}})$ is at least $(r_i - r_i^*)_{+}$ smaller on average for the points in $C^*$, forcing a net cost reduction large enough to offset any potential cost increase for points outside $C^*$.

In detail, partition $\widehat{\mathcal{S}}$ into $K$ clusters under \emph{Voronoi} assignment:
\[
  \widehat{C}(j)
  \;=\;
  \bigl\{
    i : 
    j=\arg\min_{\,x\in\widehat{\mathcal{S}}} d(i,x)
  \bigr\},
  \quad
  j\in \widehat{\mathcal{S}}.
\]
Since $|\,\widehat{\mathcal{S}}|=K$, there must exist at least one $j_{\mathrm{out}}\in \widehat{\mathcal{S}}$ whose cluster $\widehat{C}(j_{\mathrm{out}})$ has weight
\(\displaystyle
\sum_{i\in\widehat{C}(j_{\mathrm{out}})} w_i
 \;\le\;
 \frac{1}{K}\,\sum_{i=1}^N w_i.
\)
We remove that $j_{\mathrm{out}}$ and add $j^*_{\text{large}}$.

\medskip
\noindent
\textbf{Step 6: Net Cost Change Analysis.}
After the swap, the net change in cost is $\Delta = \Delta_{in} + \Delta_{out}$. The "in-gain" for points $i \in C^* = C(j^*_{large})$ is bounded by:
\begin{equation}
\label{eqn:Delta_in}
  \Delta_{\mathrm{in}}
  \;=\;
  \sum_{i\in C^*}
    w_i\bigl(d(i,\widehat{\mathcal{S}}_{\mathrm{swap}})
    - d(i,\widehat{\mathcal{S}})\bigr)
  \;\le\;
  -\sum_{i\in C^*}w_i\,(r_i-r_i^*)_{+} = -D_{j^*_{large}}.
\end{equation}
The "out-loss," $\Delta_{out}$, represents the potential cost increase for points not in $C^*$, primarily those that were served by the removed center $j_{out}$. Bounding this term is the most complex part of the proof.

\begin{remark}[Bounding $\Delta_{out}$]
The analysis in \citet{arya2001local} uses a series of clever applications of the triangle inequality to show that the cost increase, $\Delta_{out}$, is bounded relative to the cost of the optimal solution. A simplified (though non-trivial) result of this bounding shows that for an appropriately chosen $j_{out}$, the increase can be bounded such that:
\begin{equation}
\label{eqn:delta_out_bound}
    \Delta_{out} \le \frac{c(\mathcal{S}^*)}{K}.
\end{equation}
This bound is sufficient to complete the proof. We defer the detailed derivation of this specific bound to the original literature and proceed with this result.
\end{remark}

\medskip
\noindent
\textbf{Step 7: Arriving at a contradiction.}
Combining our bounds, the total change in cost is:
\[
  c\bigl(\widehat{\mathcal{S}}_{\mathrm{swap}}\bigr)
  -
  c\bigl(\widehat{\mathcal{S}}\bigr)
  =
  \Delta_{in} + \Delta_{out}
  \;\le\;
  -D_{j^*_{large}}
  \;+\;
  \frac{c(\mathcal S^*)}{K}.
\]
From Step 4 (Eq. \ref{eqn:4_s_star_bound}), we know $D_{j^*_{large}} > 4\,\frac{c(\mathcal S^*)}{K}$. Substituting this in gives:
\[
  c\bigl(\widehat{\mathcal{S}}_{\mathrm{swap}}\bigr)
  -
  c\bigl(\widehat{\mathcal{S}}\bigr)
  \;<\;
  -4\,\frac{c(\mathcal S^*)}{K}
  \;+\;
  \frac{c(\mathcal S^*)}{K}
  \;=\;
  -3\,\frac{c(\mathcal S^*)}{K}
  \;<\;0.
\]
This shows a strict decrease in cost, which contradicts the local optimality of $\widehat{\mathcal{S}}$. Therefore, our initial assumption must be false, and we conclude that $c(\widehat S)\le5\,c(S^*)$.

\medskip
\noindent
\textbf{Time Complexity.}
At each iteration we try all $O(K\,N)$ possible 1-swaps.  By maintaining for each point $i$ its distance to the nearest center in $\mathcal S$, we can update the total cost in $O(N)$ time per swap check; hence each pass costs $O(K\,N^2)$.  Moreover, letting
\[
  W_{\mathrm{tot}} = \sum_{i=1}^N w_i,
  \qquad
  D_{\max} = \max_{i,j} d(i,j),
\]
we have 
\[
  0 \;\le\; c(\mathcal S)\;\le\; W_{\mathrm{tot}}\,D_{\max}.
\]
Since all weights and distances come from the finite input, there is a minimum positive gap $\delta>0$ between any two distinct cost values.  Therefore each improving swap decreases $c(\mathcal S)$ by at least $\delta$, so there can be at most
\[
  \frac{W_{\mathrm{tot}}\,D_{\max}}{\delta}
\]
such swaps.  Altogether the algorithm performs 
\[
  O\Bigl(K\,N^2\Bigr)
  \; \times\; 
  O\!\Bigl(\tfrac{W_{\mathrm{tot}}\,D_{\max}}{\delta}\Bigr)
  \;=\;
  \mathrm{poly}(\text{input size})
\]
total work, i.e.\ it runs in polynomial time.

\end{proof}

\begin{remark}[Improved Constants]
A more intricate analysis can tighten the factor 5 in \Cref{thm:local_search_kmedoids} to 3 or 4.  See, e.g., \citep{gupta2008simpler,arya2001local} for classical refinements.  The simpler argument here suffices to establish the main principles.
\end{remark}

\section{Theoretical Guarantee for AMPO-Coreset}
\label{sec:constant_factor_subset_selection}

\noindent
This appendix provides the theoretical motivation for the \textsc{AMPO-Coreset} selection strategy. We first introduce the concept of a coreset and then present a formal theorem showing that, under certain clustering assumptions, this strategy yields a policy with a guaranteed additive bound on its expected reward.

\paragraph{Coresets for Representative Selection.}
The term \emph{coreset} originates in computational geometry and machine learning, referring to a small, weighted subset of data that approximates the entire dataset with respect to a particular objective or loss function \citep{bachem2017practical,feldman2020turning}. In the context of \textsc{AMPO-Coreset}, the $K$-means clustering subroutine identifies representative embedding-space regions. By choosing a single worst-rated example from each region, we mimic a coreset-based selection principle: our selected negatives approximate the distributional diversity of the entire batch of responses. This ensures the model receives penalizing signals for all major modes of undesired behavior, mitigating the risk of ignoring infrequent but problematic minority clusters.




\subsection{Additive Guarantee under Bounded-Diameter Clustering}
\label{sec:additive_bound_subset_selection}

Recall from Appendix \ref{sec:theory_opt_select_extended} that we use normalized weights
\[
  W \;=\;\sum_{j=1}^N (r_{\max}-r_j),
  \qquad
  w_i \;=\;\frac{r_{\max}-r_i}{W},
  \quad
  \text{so that} \quad
  \sum_i w_i=1.
\]
This allows Lemma \ref{lem:coverage_negative} (Coverage‐cost controls negative reward) to give the tight bound
\(
  \max_{p\in\mathcal F(S)}\sum_i r_i p_i
  = r_{\max} - L\;\mathrm{Cost}(\mathcal S).
\)
We now show that under the clustering assumption of AMPO-Coreset, the cost term is bounded by $d_{\max}$.

\begin{theorem}[Additive $Ld_{\max}$-Guarantee for Coreset Selection]
\label{thm:additive_Ld_bound}
Suppose the $N$ candidate responses can be partitioned into $K$ clusters $\{C_1,\dots,C_K\}$ in embedding space, each of diameter at most $d_{\max}$:
\[
  \max_{i,i'\in C_j}\|\mathbf{e}_i-\mathbf{e}_{i'}\|\;\le\;d_{\max}
  \quad(j=1,\dots,K).
\]
Let the negative set $\mathcal{S}$ be formed by picking one arbitrary index $i_j^-\in C_j$ from each cluster $C_j$. Then, the maximum expected reward achievable by a Lipschitz-compliant policy using this negative set $\mathcal{S}$ is bounded by:
\[
  \max_{\substack{p_j=0\;(\forall j\in\mathcal{S}) \\ p_i\le L\min_{l\in \mathcal{S}}\|\mathbf{e}_i-\mathbf{e}_l\|}}
    \sum_{i=1}^N r_i\,p_i
  \;\ge\;
  r_{\max}\,-\,L\,d_{\max},
\]
where $r_{\max} = \max_i r_i$. This guarantees the expected reward is within an additive error of $Ld_{\max}$ of the highest possible reward.
\end{theorem}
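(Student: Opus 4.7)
The plan is to apply Lemma~\ref{lem:coverage_negative} directly, reducing the theorem to a bound on the coverage cost $\mathrm{Cost}(\mathcal{S})$ that is immediate from the cluster-diameter hypothesis. The key observation is that the coreset construction guarantees every candidate lies within $d_{\max}$ of \emph{some} chosen negative, so the weighted ``distance to the nearest selected negative'' is uniformly bounded by $d_{\max}$.

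First I would invoke Lemma~\ref{lem:coverage_negative} with the negative set $\mathcal{S} = \{i_1^-,\dots,i_K^-\}$ and the normalized weights $w_i = (r_{\max}-r_i)/W$, which gives
\[
  \max_{p\in\mathcal{F}(\mathcal{S})}\sum_{i=1}^N r_i\,p_i \;=\; r_{\max}\;-\;L\cdot\mathrm{Cost}(\mathcal{S}),
\]
where $\mathrm{Cost}(\mathcal{S}) = \sum_i w_i\,\min_{l\in\mathcal{S}}\|\mathbf{e}_i-\mathbf{e}_l\|$. The task thereby reduces to proving $\mathrm{Cost}(\mathcal{S}) \le d_{\max}$.

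Next I would bound each per-point coverage distance using the cluster-diameter hypothesis. For every $i\in C_j$, the representative $i_j^-$ also lies in $C_j$, so $\|\mathbf{e}_i-\mathbf{e}_{i_j^-}\|\le d_{\max}$, giving $\min_{l\in\mathcal{S}}\|\mathbf{e}_i-\mathbf{e}_l\| \le \|\mathbf{e}_i-\mathbf{e}_{i_j^-}\|\le d_{\max}$. Summing over the partition $\{C_1,\dots,C_K\}$ and using $\sum_i w_i = 1$,
\[
  \mathrm{Cost}(\mathcal{S}) \;=\; \sum_{j=1}^K\sum_{i\in C_j} w_i\,\min_{l\in\mathcal{S}}\|\mathbf{e}_i-\mathbf{e}_l\| \;\le\; d_{\max}\sum_{i=1}^N w_i \;=\; d_{\max}.
\]
Substituting into the lemma's identity immediately yields $\max_{p\in\mathcal{F}(\mathcal{S})}\sum_i r_i p_i \ge r_{\max}-L\,d_{\max}$, as claimed.

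The main obstacle to watch for is the feasibility condition underlying Lemma~\ref{lem:coverage_negative}: the witness distribution it constructs, namely $p_i = L\min_{l\in\mathcal{S}}\|\mathbf{e}_i-\mathbf{e}_l\|$ for $i\ne i_{\mathrm{top}}$ with residual mass placed on $i_{\mathrm{top}}$, is a valid probability vector only when $L\sum_{i\ne i_{\mathrm{top}}}\min_l A_{i,l}\le 1$. Under the clustering assumption this reduces to the mild smoothness bound $L(N-1)d_{\max}\le 1$, which I would either state explicitly as a standing assumption on $(L,d_{\max})$ or absorb into the Opt-Select feasibility hypothesis already invoked in Section~\ref{sec:theory_opt_select_extended}. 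A secondary housekeeping point, namely that $i_{\mathrm{top}}$ may coincide with some $i_j^-$, is benign: since $w_{i_{\mathrm{top}}} = (r_{\max}-r_{\max})/W = 0$, the top-ranked response contributes zero to $\mathrm{Cost}(\mathcal{S})$ and the cluster-bound argument still goes through for all $i\ne i_{\mathrm{top}}$.
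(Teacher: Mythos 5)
Your proposal is correct and follows essentially the same route as the paper's own proof: invoke Lemma~\ref{lem:coverage_negative} with the normalized weights, bound each term $\min_{l\in\mathcal{S}}\|\mathbf{e}_i-\mathbf{e}_l\|$ by $d_{\max}$ via the same-cluster representative, and use $\sum_i w_i = 1$ to conclude $\mathrm{Cost}(\mathcal{S})\le d_{\max}$. Your explicit attention to the feasibility condition (the witness distribution being a valid probability vector, e.g.\ via $L(N-1)d_{\max}\le 1$) and to the degenerate case $i_{\mathrm{top}}=i_j^-$ is a welcome bit of extra rigor that the paper leaves implicit, but it does not change the argument.
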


\begin{proof}
We use the result from Lemma~\ref{lem:coverage_negative}, which states that the maximum expected reward is $r_{\max} - L \cdot \mathrm{Cost}(\mathcal{S})$, where the cost function uses normalized weights $w_i = (r_{\max}-r_i)/W$. We need to bound $\mathrm{Cost}(\mathcal{S})$ for our chosen $\mathcal{S}$.
\[
    \mathrm{Cost}(\mathcal{S})
    \;=\;
    \sum_{i=1}^N
      w_i
      \;\min_{\,l\in\mathcal{S}}
        \|\mathbf{e}_i-\mathbf{e}_l\|.
\]
For any point $y_i$, it belongs to some cluster $C_j$. By construction, the set of negatives $\mathcal{S}$ contains the point $i_j^-$ from that same cluster. Therefore, the distance from $y_i$ to its closest negative in $\mathcal{S}$ is at most its distance to $y_{i_j^-}$, which is bounded by the cluster diameter:
\[
  \min_{l\in\mathcal S}\|\mathbf e_i-\mathbf e_l\|\;\le\;\|\mathbf e_i-\mathbf e_{i_j^-}\| \;\le\; d_{\max},
  \quad
  \forall\,i \in C_j.
\]
Since this holds for all points $i$, we can bound the cost:
\[
  \mathrm{Cost}(\mathcal{S})
  \;=\;
  \sum_{i=1}^N
    w_i
    \,\underbrace{\min_{l\in\mathcal{S}}
      \|\mathbf{e}_i-\mathbf{e}_l\|}_{\le d_{\max}}
  \;\le\;
  \sum_{i=1}^N w_i\,d_{\max}
  \;=\;
  d_{\max}\sum_{i=1}^N w_i
  \;=\;
  d_{\max},
\]
where the final step uses the fact that the normalized weights sum to 1. Substituting this bound back into the expression for the maximum expected reward gives:
\[
  \max \sum_{i=1}^N r_i\,p_i
  =
  r_{\max} - L\,\mathrm{Cost}(\mathcal{S})
  \;\ge\;
  r_{\max}
  \;-\;
  L\,d_{\max},
\]
which completes the proof.
\end{proof}

\paragraph{Remark (Distribution-Dependent Guarantee).}
The above theorem provides a deterministic guarantee for a fixed set of $N$ points. In practice, we learn from a finite sample of responses drawn from an unknown underlying distribution $\mathcal{D}$. If we learn $K$ clusters from a sufficiently large i.i.d. sample of responses, standard uniform convergence arguments (see, e.g., \cite{bachem2017practical}) show that these empirical clusters will, with high probability, also cover new responses drawn from $\mathcal{D}$. Consequently, for a high fraction of new queries, the policy derived from the coreset selection strategy is expected to achieve a near-optimal reward, with an additive error similar to the $Ld_{\max}$ bound.
\section{Optimal Selection Code}
\label{sec:optimal_selection_computation}

In this section we provide the actual code used to compute the optimal selection.

\begin{lstlisting}[language=Python]
import numpy as np
from scipy.spatial.distance import cdist

def solve_local_search_min_dist_normalized(
    vectors: np.ndarray,
    rating: np.ndarray,
    k: int,
    max_iter: int = 100,
    random_seed: int = 42
):
    # Normalize ratings
    rating_min = np.min(rating)
    rating_max = np.max(rating)
    rating_normalized = (rating - rating_min) / (rating_max - rating_min) if rating_max > rating_min else np.zeros_like(rating) + 0.5  

    # Identify top-rated point
    excluded_top_index = int(np.argmax(rating_normalized))

    # Reduce dataset
    new_to_old = [idx for idx in range(len(rating_normalized)) if idx != excluded_top_index]
    vectors_reduced = np.delete(vectors, excluded_top_index, axis=0)
    rating_reduced = np.delete(rating_normalized, excluded_top_index)

    # Compute L2 distances and normalize
    if len(rating_reduced) == 0:
        return excluded_top_index, None, [], [], []
    distance_matrix = cdist(vectors_reduced, vectors_reduced, metric='euclidean')
    distance_matrix /= distance_matrix.max() if distance_matrix.max() > 1e-12 else 1

    # Compute weights
    mean_rating_reduced = np.mean(rating_reduced)
    w = np.exp(mean_rating_reduced - rating_reduced)

    # Local search setup
    def compute_objective(chosen_set):
        return sum(w[i] * min(distance_matrix[i, j] for j in chosen_set) for i in range(len(w)))

    rng = np.random.default_rng(random_seed)
    all_indices = np.arange(len(rating_reduced))
    current_set = set(rng.choice(all_indices, size=k, replace=False)) if k < len(rating_reduced) else set(all_indices)
    current_cost = compute_objective(current_set)

    # Local search loop
    improved = True
    while improved:
        improved = False
        best_swap = (None, None, 0)
        for j_out in list(current_set):
            for j_in in all_indices:
                if j_in not in current_set:
                    candidate_set = (current_set - {j_out}) | {j_in}
                    improvement = current_cost - compute_objective(candidate_set)
                    if improvement > best_swap[2]:
                        best_swap = (j_out, j_in, improvement)
        if best_swap[2] > 1e-12:
            current_set.remove(best_swap[0])
            current_set.add(best_swap[1])
            current_cost -= best_swap[2]
            improved = True

    chosen_indices_original = [new_to_old[j] for j in sorted(current_set)]
    rejected_indices_original = [new_to_old[j] for j in sorted(set(all_indices) - current_set)]
    return excluded_top_index, chosen_indices_original[0], rejected_indices_original[:k], chosen_indices_original, rejected_indices_original
\end{lstlisting}
\clearpage
\section{Visualization of t-SNE embeddings for Diverse Responses Across Queries}
\label{sec:tsne_visualization}

In this section, we showcase the performance of our method through plots of TSNE across various examples. These illustrative figures show how our baseline Bottom-k Algorithm (Section \ref{sec:ampo_bottomk}) chooses similar responses that are often close to each other. Hence the model misses out on feedback relating to other parts of the answer space that it often explores. Contrastingly, we often notice diversity of response selection for both the $\ampoos$ and $\ampocs$ algorithms.

\begin{figure*}[!thbp]
    \centering
    \subfigure[1.]{
        \includegraphics[width=1.0\textwidth]{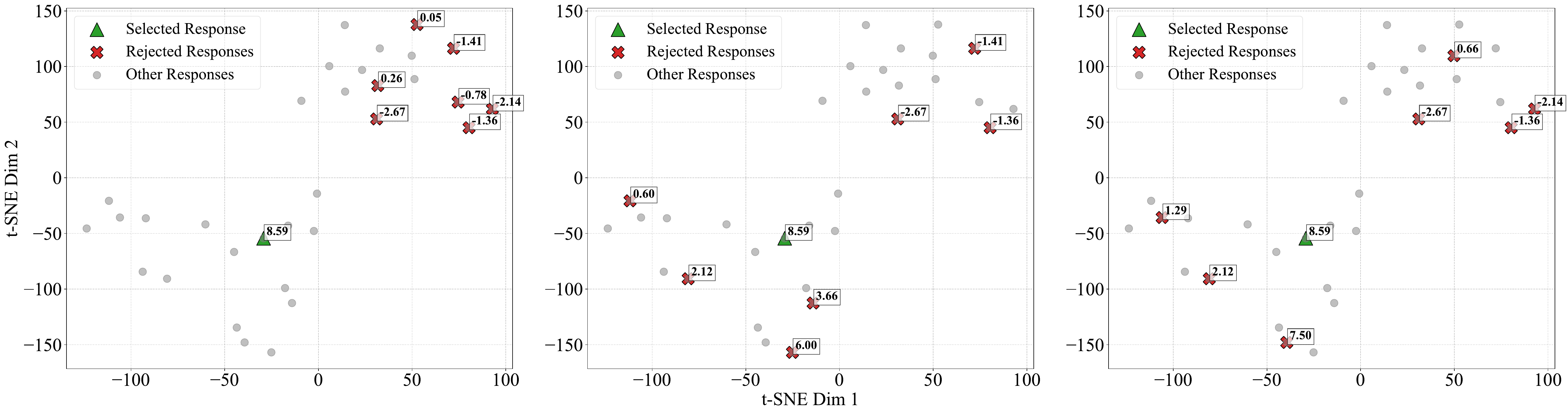}
        \label{fig:tsne_bottomk}
    }
    
    \subfigure[2.]{
        \includegraphics[width=1.0\textwidth]{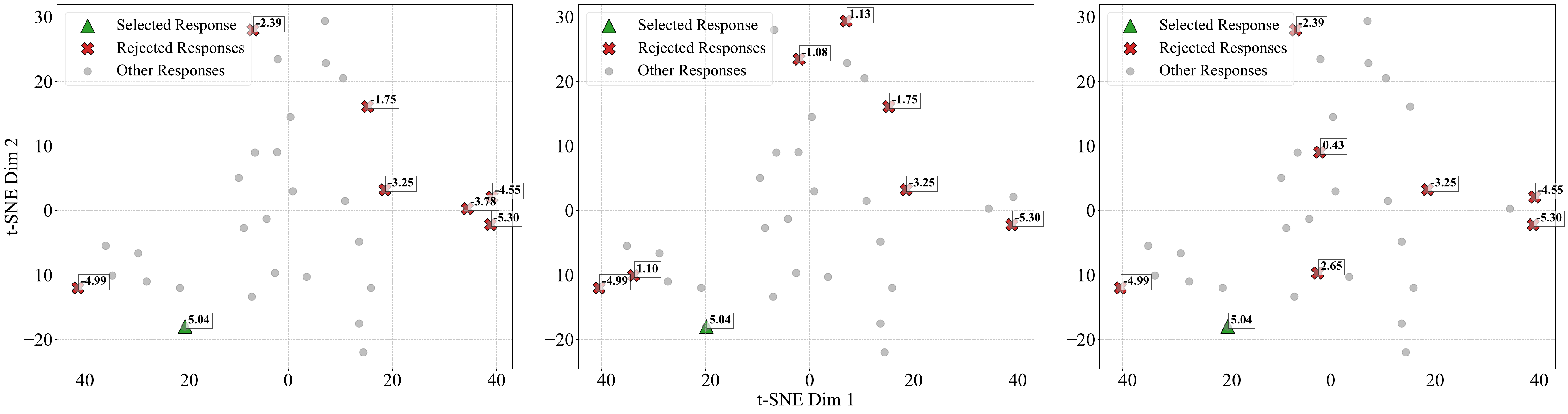}
        \label{fig:tsne_coreset}
    }
    
    \subfigure[3.]{
        \includegraphics[width=1.0\textwidth]{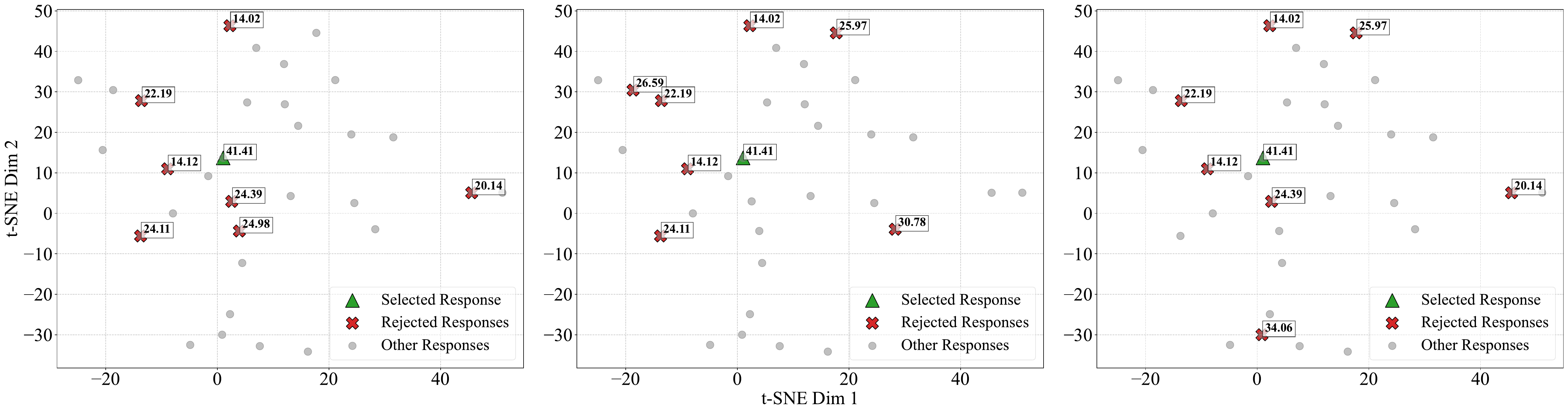}
        \label{fig:tsne_optselect}
    }
    
    \caption{t-SNE visualization of projected high-dimensional response embeddings into a 2D space, illustrating the separation of actively selected responses. (a) AMPO-BottomK (baseline). (b) AMPO-Coreset (ours). (c) Opt-Select (ours). Traditional baselines select many responses close to each other based on their rating, providing insufficient feedback to the LLM during preference optimization. In contrast, our methods optimize for objectives including coverage, generation probability, and preference rating.}
    \label{fig:tsne_combined}
\end{figure*}

\end{document}